\pgfplotsset{compat=1.10}
\newtheorem{theorem}{Theorem}[section]
\newtheorem{lemma}{Lemma}[section]
\newtheorem{corollary}{Corollary}[section]
\newtheorem{definition}{Definition}[section]
\newtheorem{remark}{Remark}[section]
\newcounter{numrellocal}
\renewcommand{\thenumrellocal}{\arabic{numrellocal}}
\newcounter{numrelglobal}
\newcommand{\numrel}[2]{
  \stepcounter{numrellocal}
  \refstepcounter{numrelglobal}
  \ltx@label{#2}
  \overset{(\thenumrellocal)}{#1}
}
\newcommand{\E}{\mathop{\mathbb{E}}}
\newcommand{\argmin}{\mathop{\mathrm{argmin}}}
\newcommand{\argmax}{\mathop{\mathrm{argmax}}}
\newcommand{\cZ}{\mathcal{Z}}
\newcommand{\cX}{\mathcal{X}}
\newcommand{\cY}{\mathcal{Y}}
\newcommand{\cG}{\mathcal{G}}
\newcommand{\cD}{\mathcal{D}}
\title{Reconciling Individual Probability Forecasts}
\author[1]{Aaron Roth}
\author[2]{Alexander Tolbert}
\author[2]{Scott Weinstein}
\affil[1]{Department of Computer and Information Sciences, University of Pennsylvania}
\affil[2]{Department of Philosophy, University of Pennsylvania}
\begin{document}
\maketitle
\begin{abstract}
Individual probabilities refer to the probabilities of outcomes that are realized only once: the probability that it will rain tomorrow, the probability that Alice will die within the next 12 months, the probability that Bob will be arrested for a violent crime in the next 18 months, etc. Individual probabilities are fundamentally unknowable. Nevertheless, we show that two parties who agree on the data---or on how to sample from a data distribution---cannot agree to disagree on how to model individual probabilities. This is because any two models of individual probabilities that substantially disagree can together be used to empirically falsify \emph{and improve} at least one of the two models. This can be efficiently iterated in a process of ``reconciliation'' that results in models that both parties agree are superior to the models they started with, and which themselves (almost) agree on the forecasts of individual probabilities (almost) everywhere. We conclude that although individual probabilities are unknowable, they are \emph{contestable} via a computationally and data efficient  process that must lead to agreement. Thus we cannot find ourselves in a situation in which we have two equally accurate and unimprovable models that disagree substantially in their predictions---providing an answer to what is sometimes called the predictive or model multiplicity problem.
\end{abstract}


\section{Introduction}
Probabilistic modelling in machine learning and statistics predicts ``individual probabilities'' as a matter of course. In weather forecasting, we speak of the probability of rain tomorrow; in life insurance underwriting we speak of the probability that Alice will die in the next 12 months; in recidivism prediction we speak of the probability that an inmate Bob will commit a violent crime within 18 months of being released on parole; in predictive medicine we speak of the probability that Carol will develop breast cancer before the age of 50 --- and so on. But these are not repeated events: we have no way of directly measuring an ``individual probability'' --- and indeed, even the semantics of an individual probability are unclear and have been the subject of deep interrogation within the philosophy of science and statistics \citep{hajek2007reference,dawid2017individual} and theoretical computer science \citep{dwork2021outcome}. Within the philosophy of science, puzzles related to individual probability have been closely identified with ``the reference class problem'' \citep{hajek2007reference}. This is a close cousin of a concern that has recently arisen in the context of fairness in machine learning called the ``predictive multiplicity problem'' (a focal subset of ``model multiplicity problems'')  \citep{marx2020predictive,black2022multiplicity} which \cite{breiman2001statistical} earlier called the ``Rashomon Effect''. At the core of both of these problems is the fact that from a data sample that is much smaller than the data universe (i.e. the set of all possible observations), we will have observed at most one individual with a particular set of characteristics, and at most one outcome for the event that an ``individual probability'' speaks to: It will either rain tomorrow or it will not; Alice will either die within the next year or she will not; etc. We do not have the luxury of observing a large number of repetitions and taking averages. 

\cite{dawid2017individual} lays out two broad classes of perspectives on individual probabilities: the \emph{group to individual} perspective and the \emph{individual to group} perspective. The group to individual perspective is roughly as follows: We cannot measure individual probabilities from data, but we \emph{can} measure averages of outcomes within sufficiently large \emph{reference classes} $S$. A reference class $S$ is just some well defined subset of the observed data: for example (in the weather forecasting setting) the set of days in which there is cloud cover and humidity is above 60\%, or (in the life insurance setting) the set of 65 year old women with a history of high blood pressure. Given a reference class that is large enough that we have observed in our data many members of the reference class, we can empirically estimate the prevalence of the outcome we are concerned with forecasting (rain, death within 12 months) for members of the reference class. Then, if we are asked to forecast an individual probability (the probability that \emph{Alice} will die within the next 12 months), we simply pick an appropriate reference class $S$ such that $\text{Alice} \in S$ and then respond with the proportion of observed deaths within a 12 month period for individuals from reference class $S$. The principal problem with this approach (known as the ``reference class problem'' \citep{hajek2007reference}) is that Alice will simultaneously be a member of many different reference classes $S$. We cannot condition on \emph{everything} we know about Alice, or we will end up with a reference class that does not contain enough examples for us do statistical inference on: thus we must pick and choose. But should we have conditioned on her age, gender and blood pressure? What about her weight? Her job? Her marital status? Her vaccination history? Defining reference classes with respect to different subsets of these attributes will generally lead to different estimates for the probability that Alice will die within the next 12 months: what privileges one of these estimates over another? This is the reference class problem. 

On the other hand, the \emph{individual to group} perspective treats individual probabilities as the first class objects. This is the perspective most familiar in machine learning and statistics: models $f$ are learned from data with the goal of mapping individuals (e.g. ``Alice'') to individual probabilities for the outcome of interest, $f(\text{Alice})$.\footnote{Here of course what is input to the model is some \emph{representation} of the individual, encoding the information that we have available about them.} Models of individual probabilities can also be aggregated over to give predicted probabilities conditional on reference classes. If we want to evaluate the probability of an outcome conditional on some reference class $S$, we can do so by averaging the model's predictions over individuals in $S$. We cannot measure individual probabilities, but from data we can measure the average probability of an outcome over a sufficiently large reference class $S$, which gives us a way to empirically falsify a model $f$ from data: if the prediction implied by $f$ for the average outcome conditional on a large reference class does not match the average outcome we can measure from the data, then the model $f$ must be wrong. Multicalibration, introduced by \cite{hebert2018multicalibration}, gives us a way to build models of individual probabilities that are consistent with the data for large numbers of arbitrarily chosen reference classes $S$ --- i.e. models that are not empirically falsified by any of the pre-specified reference classes. Nevertheless, multi-calibrated models are not unique\footnote{If a model is \emph{perfectly} multicalibrated with respect to every reference class (including singletons that contain only single individuals), then it must encode the true individual probabilities, but this cannot be approximated from data unless many samples of every possible representation of an individual have been observed, which is infeasible in realistic high dimensional inference settings.  Similarly, \cite{dawid85} proved that probabilities generated by computable forecasters that are multicalibrated with respect to every computable reference class must be unique, but only in a limiting sense --- two such multicalibrated models may substantially disagree on any finite set of data. The kinds of multicalibration guarantees that are obtainable from samples of data that are polynomial in the data dimension \citep{hebert2018multicalibration,gupta2022online} do not lead to unique models.}: we can have multiple models that have large disagreements in many of their individual predictions that nevertheless are equally consistent with the data on a large collection of reference classes. This is an instance of the \emph{predictive multiplicity} problem \citep{marx2020predictive,black2022multiplicity,breiman2001statistical}.

The predictive multiplicity problem is usually not phrased in terms of multicalibration and reference classes, but in terms of \emph{accuracy} or \emph{error}. If a model encodes true individual probabilities, then it will minimize expected squared error\footnote{or any other proper scoring rule.} amongst all possible models. Moreover,  expected squared error is something that we can efficiently estimate from data. Hence, if we have two models $f_1$ and $f_2$, and we can infer from data that $f_1$ has lower expected squared error than $f_2$, then this is an empirical falsification of the hypothesis that $f_2$ correctly encodes individual probabilities. This serves as a normative justification for selecting amongst models based on their accuracy, which is a common practice. The predictive  multiplicity problem arises when we have two models $f_1$ and $f_2$ (and perhaps others) that are equally accurate, but disagree substantially on many of their predictions.  More generally the predictive multiplicity problem arises when we have multiple models that differ substantially in their predictions, but are seemingly equally consistent with the data before us. 

Despite arising from different conceptions of individual probability, the reference class problem and the predictive  multiplicity problem result in the same practical concern: that data do not encode unique estimates for the individual probabilities for many  individuals. If this is the case, then what justification do we have in making consequential decisions as a result of predictions that our models make about individual probabilities? How can we justify setting a high rate for Alice's life insurance, denying parole to Bob, or suggesting life-altering preventative surgery to Carol based on the predictions of some model $f_1$ if we have an equally good (and equally well supported by the data) model $f_2$ that makes predictions that would lead us to take the opposite course of action? 

\subsection{Our Results}
We show that given a common understanding of the data (or the process of sampling from the data distribution), models of individual probabilities are \emph{contestable} through an efficient model reconciliation process that must lead to broad agreement. Specifically, suppose one party $A$ proposes a model of individual probabilities $f_A$, that another party $B$ thinks is flawed. $B$ can \emph{contest} $f_A$ by proposing their own model of individual probabilities $f_B$. There are two possible outcomes:
\begin{enumerate}
    \item $f_A$ and $f_B$ agree in their predictions almost everywhere.\footnote{We use the expressions “almost everywhere” and “almost agree” as shorthand for quantitative statements that are made explicit in the formal presentation of our results. Insofar as we are working in the context of discrete distributions, it should be clear that we are not using these expressions in their usual measure-theoretic sense. We note that our focus on discrete distributions is merely to avoid dealing with measure-theoretic niceties, and is not essential to any of our results} In this case, it turns out there was no substantial disagreement.
    \item $f_A$ and $f_B$ substantially disagree in their predictions for a large portion of the population. 
\end{enumerate}
In the second case, we can efficiently extract from the disagreement region of $f_A$ and $f_B$ a large reference class $S = S(f_A,f_B)$ such that on this reference class, not only do $f_A$ and $f_B$ disagree on individual predictions, they also disagree substantially on their prediction of the average outcome conditional on membership in $S$. Because $S$ is large, from only a modest amount of data, we can accurately estimate the average outcome conditional on $S$. But because $f_A$ and $f_B$ have a substantial disagreement about this quantity, our measurement is guaranteed to falsify at least one of the two models. 

Suppose it is model $f_A$ that is falsified. Then, using a very simple and efficient model update operation of the same sort used for computing multicalibrated models \citep{hebert2018multicalibration}, we can update $f_A$ to produce a new model $f_A'$ that now makes predictions that are correct on average over $S$. The new model $f_A'$ is guaranteed to have significantly reduced squared error compared to $f_A$, and so is a better model not only in that it has not yet been falsified, but in that it is more accurate. 

After this update, we can then repeat the process: Either $f_A'$ and $f_B$ agree on their predictions almost everywhere, or we can again falsify one of the models and improve it using a large reference class $S' = S(f_A', f_B)$. The only way for this process to end is with two models that agree in their predictions almost everywhere. Moreover, because each iteration of falsification and improvement improves the expected squared error of at least one of the two models, the process cannot continue for very many iterations --- fast agreement of the models is guaranteed. 

In Section \ref{sec:insample}, we formally derive the guarantees of our model reconciliation process under the assumption that we can directly evaluate conditional outcome probabilities conditional on large reference classes $S$: this makes our analysis more transparent. Then, in Section \ref{sec:outofsample}, we show that we can run our model reconciliation process on the empirical distribution over a modestly sized dataset that is \emph{sampled} i.i.d. from some unknown underlying distribution, and that its guarantees carry over to the unknown distribution of interest. Here ``modestly sized'' means a number of samples that is \emph{independent} of the complexity of the models to be reconciled or the dimension (or any other property) of the underlying distribution, and that depends only polynomially on the quantitative parameters controlling how closely we want the models resulting from the reconciliation process to agree. In fact, as we observe, the data required to guarantee agreement on a $1-\alpha$ fraction of the data distribution is not much larger than the data requirement that would be necessary for two parties to agree (from samples) on the average outcome probability conditional on a \emph{single} reference class representing an $\alpha$ fraction of the data.

\subsection{Discussion}

\paragraph{Are ``Individual Probabilities'' Coherent?} What are we assuming when we model the world using ``individual probabilities''? Are we assuming some kind of idealized, unrealized randomness, which is simply a poor stand-in for our ignorance of the relevant processes? No. Our modelling choices do not preclude a deterministic world: all true individual ``probabilities'' could be $0$ or $1$, simply recording the outcomes of interest. 
 We still allow for \emph{models}
to predict non-integer probabilities; we can view these as simply expressing uncertainty about
the outcomes, or as encoding objective features of a stochastic universe. Our results imply that after reconciliation, two parties must agree about their
assignments of individual probabilities, regardless of the philosophical commitments each may have about the nature of probability.

\paragraph{Agreement is Guaranteed; Not Truth} We emphasize that individual probabilities are not uniquely determined from observed data. Consider a toy model of weather forecasting in which features $x$ encode the date, and outcomes $y$ encode whether or not it rains on that date. The following two situations are observationally indistinguishable: 
\begin{enumerate}
    \item On every day $x$, the individual probability of rain $p(x) = 1/2$, and
    \item Before the start of time, God selected a subset of days uniformly at random to have individual probability of rain $p(x) = 1$ and the remaining set to  have individual probability of rain $p(x) = 0$. 
\end{enumerate} 

There is no hope of distinguishing these two situations from data about outcomes alone, and so it is plainly
impossible to learn a model that is guaranteed to accurately encode individual probabilities from such data\footnote{Of course, we do not take such radical under-determination of individual probability assignments  from data about outcomes alone to in anyway impugn the objectivity of such assignments. Indeed, the primary virtue of our results from a philosophical perspective is that they provide an efficient method to guarantee inter-subjective agreement about individual probability assignments and thus secure their objectivity to this extent.}— in
fact, it is not clear that this goal is meaningful, as they are not uniquely determined.\footnote{It is perhaps worth remarking that in this case Dawid's uniqueness result \citep{dawid85}, cited earlier, implies, with probability one with respect to God's choices, that there \textit{is} an asymptotically unique \textit{computable} assignment of probabilities that is computably calibrated with the data generated by God's choices. There is no paradox here: insofar as God's choices are generated uniformly at random, her (deterministic) probability forecast is, with probability one, algorithmically random, and thus not computable.} Nevertheless, suppose we believed that the individual probability of rain was $p(x) = 1/2$ every day. If we met a forecaster who was able to make more accurate predictions (i.e. predictions that had lower squared error) on previously unobserved data, we would be forced to recognize that our model was incorrect --- because we could \emph{compare} the performance of the two models on data. This drives our result (and similar work on multiple expert testing \citep{al2008comparative,feinberg2008testing} --- see Section \ref{sec:related}), and is the reason that we can guarantee \emph{agreement} rather than \emph{truth}. Nevertheless, the updates that result from our reconciliation process always move towards truth---because they are error improving---but they stop when the available models agree, which might be well before truth is attained.  

\paragraph{Predictive Multiplicity Comes from Restricting Model Classes}
Previous work has empirically noted and quantified the phenomenon of predictive  multiplicity---i.e. that solving an error minimization problem over some class of models can result in multiple solutions of (roughly) equivalent error \citep{d2020underspecification,marx2020predictive}. How do these results square with our contention that the predictive  multiplicity problem cannot arise, because two equally accurate \emph{but substantially different} models constructively imply the existence of a more accurate model? 

The answer is that predictive  multiplicity can arise when models are restricted to lie within some pre-specified hypothesis class, like linear threshold functions, bounded depth decision trees, or neural networks with a particular architecture. Traditionally machine learning is done by optimizing a model within a fixed model class, and this is the setting in which predictive multiplicity has been empirically observed and quantified. In contrast, our algorithm for reconciling pairs of models $f_1$ and $f_2$ produces a model $f_3$ that need not lie in the same model class as $f_1$ and $f_2$. This is key to side-stepping the predictive multiplicity problem. Traditional methods in machine learning and statistics optimize over models from restricted classes to avoid the problem of overfitting. In contrast, we avoid overfitting despite not restricting our model classes \textit{a priori} by bounding the \emph{number} of updates that can occur through our reconciliation process.

\paragraph{What about Kasey?} Suppose we have a model that has gone through our reconciliation process with another model for the same task, and so we can promise that (say) it predicts the same individual probability (plus or minus $1\%$) as the other model for 99\% of individuals. We then use this model to predict an individual probability about Kasey, which we use to inform a consequential decision. Suppose Kasey is among the 1\% of people for whom the models disagree. Kasey's main interest is that there should be no ambiguity about his individual prediction --- why should he be satisfied that we can promise that the models must agree \emph{almost} everywhere, if they do not agree on their predictions for him?\footnote{Adapted from an insightful question asked of us by Konstantin Genin.}

We can say nothing about Kasey's true individual probability.  Nevertheless, what we can promise is that there are not many people in Kasey's position---such that there are two models equally consistent with the data  that disagree on their individual predictions for that individual. The result is that for people like Kasey, we can afford to devote additional resources towards decision making that we could not afford if Kasey's situation was more common. For example, for the 1\% of the population for whom the models are not in agreement, we can imagine devoting more time, human expertise, and deliberation than we could for the general population---perhaps by escalating these cases to a panel of experts. Alternately, in a setting in which Kasey has a clearly preferred outcome, we might be able to afford to give him the benefit of the doubt --- for example, if our best models disagree on Kasey's individual probability in a decision relevant way, we could commit to using the most favorable of the predictions for Kasey. Again, we can afford to act in this way because there are only few such individuals\footnote{More specifically, when our algorithm is run on a sample of people drawn from a distribution, we promise that we produce models that both agree on their predictions on 99\% of people in the sample, and 99\% of the probability mass of the distribution from which the sample was drawn. If we imagine the sampling distribution as being defined over some much larger finite ``population'', then this corresponds to agreement on 99\% of the population \emph{if the sampling distribution is uniform over the underlying population}. If the sampling distribution is not uniform (either by design, for example, in the case that the feature domain $\cX$ used to model the underlying population (see Section 2) is countably infinite, or because of an error in the sampling procedure), then the guarantee of agreement on 99\% of the mass of the sampling distribution need not correspond to agreement on 99\% of the underlying population any longer, and there may be ``many such'' people on whom the model disagrees even though they have only small probability mass in the sampling distribution. In general, in our exposition, we will make the assumption that the sampling distribution correctly represents the population we are interested in.}. 
Moreover, our model reconciliation process can drive the fraction of the population on which the models disagree to $0$ at the cost of access to more data --- although of course with finite amounts of data we will never entirely eliminate disagreements. 

\paragraph{Do models really predict individual probabilities?} Another objection we can imagine is that in the settings we discuss---weather prediction, life insurance underwriting, recidivism prediction, etc.---it is logically impossible to observe repeated trials, because tomorrow will only occur once, Alice has only one life to live, and so on. In contrast, when we move to the formalism of a probability distribution over representations of individuals, it may be extremely unlikely (or even a measure 0 event) to observe the same representation of an individual multiple times, but it is no longer a logical impossibility. Said another way, when we model individuals in some representation space, we may fail to record idiosyncratic details of the individual, and so we are no longer speaking of individual probabilities, but rather average outcomes over the reference class defined by people who share the same representation. But this is not a sharp distinction, because our results have no dependence at all on the dimensionality or complexity of the representation we use for individuals. For this objection to have teeth, it must be that there is some crucial idiosyncrasy of an individual that we have failed to capture in our representation: if so, add this to our representation! Our results remain the same (not just qualitatively but also quantitatively) even if the representation of every individual records the position of every molecule in their body, a complete history of their life from birth until the present, or anything else, and so does not rely even implicitly on having only an impoverished representation of an individual to work with rather than ``the real thing''.

\subsection{Additional Related Work}
\label{sec:related}

Our work is related to a number of strands of literature across statistics, economics, and computer science. \cite{nau1991arbitrage} showed that agents interacting in a market that has no arbitrage opportunities must agree on the probabilities of all payoff relevant outcomes, because if there are two agents who disagree on such probabilities, then a third party has the ability to engage in transactions with the two of them that are guaranteed to be profitable no matter what the outcome, with the magnitude of the profit scaling with the number and magnitude of the disagrements about outcome probabilities.  \cite{aumann1976agreeing} proved that two Bayesians who share a common prior, but may have made different observations, must agree on the posterior expectation of a random variable if their posterior distributions are common knowledge.  Although Aumann's original result was nonconstructive, subsequent work has shown that agreement can be reached with finite, communication efficient protocols \citep{geanakoplos1982we,aaronson2005complexity}. Despite similarity in its conclusions, this line of work is quite distinct from ours. In the Bayesian setting that this line of work focuses on, it is immediate that two  agents who share the same set of observations and prior beliefs must share the same posterior beliefs (as a posterior distribution is determined, via Bayes rule, as a function only of the prior distribution and observations). Aumann's agreement theorem instead shows that if agents have arrived at common knowledge of their posterior distributions, then their posteriors must agree \emph{even if they have not directly shared their observations}. In contrast, in a frequentist setting, individual probabilities are not uniquely determined from data, which forms the basis of the reference class problem\footnote{In a Bayesian framework, problems that are similar to the reference class problem emerge in making ones choice of priors \citep{hajek2007reference}.} \citep{hajek2007reference} and the model multiplicity problem \citep{black2022multiplicity}. Our work considers how two frequentist agents who agree on the same set of data (or the distribution from which it was drawn) must come to agree on individual probabilities --- a problem which would not arise in the first place if they were Bayesian agents with a common prior.

\cite{dawid1982well} proposed calibration as a desirable frequentist condition for evaluating probabilistic forecasts: roughly speaking that the outcome being forecast should have appeared with empirical frequency $p$ conditional on the forecaster predicting probability $p$ of the outcome, simultaneously for all predictions $p$. Subsequently, \cite{dawid85} studied a substantial strengthening of this condition called \emph{computable calibration} that requires calibration to hold simultaniously on all computable subsets of the data.\footnote{Dawid explicitly links calibration as a criterion of adequacy for a forecasting model to the notion of randomness - a forecaster $f$ is calibrated with respect to a data sequence $s$ if and only if $s$ is random with respect to the probability distribution induced by $f$ on the collection of all data sequences -- if a probabilistic model is a correct explanation of the data, the data must not look atypical, that is non-random, with respect to the model. Of course, in order to endow this notion with explicit mathematical content, one requires a suitable mathematical theory of randomness. The notion of computable calibration adopted by Dawid draws on a notion of algorithmic randomness 
elaborated by von Mises, Wald, and Church in the 1930's. This notion was shown to be defective by \cite{Ville} who established that there are infinite sequences of 0's and 1's, every initial segment of which contain more 0's than 1's, that are nonetheless random in the sense of von Mises, Wald, and Church. A far more credible algorithmic notion of randomness which avoids this difficulty was introduced by \cite{Martin-Lof66} based on the idea that a sequence is random if and only if it passes every one of a comprehensive class of computably enumerable statistical tests. (An excellent account of these developments may be found in Chapter 6 of \cite{DowneyHirschfeldt}.)  Dawid notes the deficiency of the von Mises, Wald, Church notion of randomness, and suggests the desirability of investigating a notion of calibration based on Martin-L\"{o}f randomness. \cite{VOVK20102632}  investigates such a notion. \cite{bienvenuetal} studies further generalizations in this direction which apply to the case of real-valued functions.} Dawid proved that in the infinite data limit, two computably calibrated forecasters must approximately agree in their predictions almost everywhere --- that is, except on a finite subset of the data \citep{dawid85}. He notes explicitly that this criterion is not of practical use in finite data scenarios, and speculates about the desirability of restrictions of computable calibration to finite sample scenarios (anticipating \emph{multicalibration} \citep{hebert2018multicalibration}). Multicalibration \citep{hebert2018multicalibration} asks for calibration on a restricted class of subsets of the data. \cite{hebert2018multicalibration} gave algorithms for learning multicalibrated predictors with data requirements that scale only modestly with the number of subsets of the data on which calibration is required (and efficient algorithms whenever it is possible to efficiently optimize over these subsets)---but multicalibrated forecasts need not be unique. \cite{jung2021moment} generalized multicalibration (which aims to be consistent with \emph{mean} outcomes) to moments and other properties of real valued outcomes, and gave efficient algorithms for obtaining these guarantees.  \cite{dwork2021outcome} generalized multicalibration to notions of ``outcome indistinguishability'' that ask that a probabilistic forecaster be indistinguishable from a true probabilistic model with respect to a hierarchy of distinguishers that might have access not just to the predictions but to the implementation details of the forecaster itself. \cite{dwork2021outcome} explicitly connect outcome indistinguishability to philosophical questions surrounding individual probabilities. Multicalibration has proven to be an effective technique for improving individual predictions in several applications in predictive medicine \citep{barda2020developing,barda2021addressing}

\cite{foster1998asymptotic} gave the first algorithm to constructively make predictions of individual probabilities guaranteed to generate calibrated forecasts against arbitrary sequences of outcomes (and so necessarily without any knowledge of the ``true individual probabilities'', since the outcomes can be generated adversarially, with knowledge of the predictor's algorithm).  \cite{sandroni2003calibration} show constructively how to achieve calibration in the infinite data limit on any computable subsequence of an arbitrary sequence of outcomes. \cite{sandroni2003reproducible} showed that \emph{any} empirical test (not just calibration tests) that is guaranteed to pass an expert who is forecasting true individual probabilities can be passed by a prediction algorithm on any sequence of outcomes. This is closely related to the fact that individual probabilities are not uniquely specified by data --- and so we cannot attempt to test an expert by computing unique individual probabilities ourselves.  \cite{gupta2022online} gave computationally and sample efficient algorithms for achieving multi-calibrated forecasts against arbitrary sequences of outcomes --- for means, moments, and quantiles. \cite{bastani2022practical} gave practical implementations of quantile multicalibration algorithms in adversarial sequential settings, and applied them to give algorithms for producing prediction sets of various kinds of classifiers with calibrated, group-wise conditionally valid guarantees.

Although \cite{sandroni2003reproducible} showed that no empirical test of outcomes can distinguish a forecaster with knowledge of true individual probabilities from one without such knowledge in \emph{isolation}, \cite{al2008comparative} and \cite{feinberg2008testing} showed that there are \emph{comparative tests} that can distinguish between \emph{two} forecasters, one of whom is forecasting true individual probabilities and one of whom is not. In particular, the test of \cite{feinberg2008testing} is based on checking for \emph{cross-calibration} between two forecasters --- i.e. calibration conditional on the predictions of \emph{both} forecasters, and is driven by the fact that on a sequence of predictions such that one forecaster predicts a probability for an outcome $p$ and the other predicts a probability $p' \neq p$, they cannot both be right, which is empirically verifiable if there are many such rounds. In the context of studying the utility of predictors for downstream fairness interventions, \cite{garg2019tracking} study predictors that are \emph{refinements} of one another (in the sense of \cite{degroot1981assessing,degroot1983comparison}). They give a simple algorithm (``Merge'') that given any two predictors $f_1,f_2$, produces a predictor $f_3$ that is cross-calibrated with respect to $f_1$ and $f_2$, and hence is a refinement of both. A variant of the ``Merge'' algorithm of \cite{garg2019tracking} could be used in place of our ``Reconcile'' algorithm in our arguments; the two algorithms have incomparable data requirements, but would lead to the same qualitative conclusions. 

\cite{biasbounties} proposes a framework in which models that are sub-optimal on different subsets of the population can be updated and improved as part of a ``bias bounties'' program by means of falsification; this is another setting in which models can be made to be contestable.

\section{Basic Settings and Definitions}

We study prediction tasks over a domain $\cZ = \cX\times \cY$. Here $\cX$ represents the \emph{feature} domain and $\cY$ represents the label domain. To avoid dealing with measure-theoretic issues, we assume in this paper that $\cX$ is a discrete set, but this is not essential to any of our results. For this paper we will restrict attention to binary prediction tasks, where $\cY = \{0,1\}$ records the outcome of some binary event. Given a labelled example $(x,y) \in \cZ$, we view $x$ as encoding all observable characteristics of the instance (e.g. meteorological conditions in a weather prediction task, demographic attributes and medical history in a predictive medicine task, etc.), and $y$ represents the binary outcome we are trying to predict (and when part of the training data, represents the outcome of the binary event that we have observed and recorded). 

We model the world via a distribution $\cD \in \Delta \cZ$. 
Generally we will not have a direct description of the distribution, and instead have access only to  a \emph{sample} of $n$ datapoints $D$ sampled i.i.d. from $\cD$, which we will write as $D \in \cZ^n$. We will also sometimes identify a dataset $D = \{(x_1,y_1),\ldots,(x_n,y_n)\}$ with the \emph{empirical distribution} over $D$, which is simply the discrete distribution that places probability mass $1/n$ on each point $(x_i,y_i)$ for $i \in \{1,\ldots,n\}$.

A model is some function $f:\cX\rightarrow [0,1]$, and our (typically unattainable goal) is to find a model $f^*$ that has the property that for all $x \in \cX$, $f^*(x) = \Pr_{(x,y) \sim \cD}[y=1 | x]$ is the \emph{conditional label expectation} given $x$, or (since we are assuming labels are binary) just ``the individual probability'' of the outcome for $x$.  

Suppose someone purports to have a model for individual probabilities $f$. How can we evaluate whether $f$ is any good? If our goal was purely prediction, we might evaluate $f$ via its \emph{squared error} --- i.e. the expected (squared) deviation of its prediction from the true label. This is the objective we would minimize if we were solving (e.g.) a least squares regression problem:

\begin{definition}[Brier Score]
The squared error (also known as \emph{Brier score}) of a model $f$ evaluated on distribution $\cD$ is:
$$B(f,\cD) = \E_{(x,y) \sim \cD}[(f(x) - y)^2]$$

Observe that when we treat a dataset $D = \{(x_1,y_1),\ldots,(x_n,y_n)\}$ as an empirical distribution, then we have:
$$B(f,D) = \frac{1}{n}\sum_{i=1}^n(f(x_i)-y_i)^2$$
\end{definition}

The Brier score can be accurately estimated given access only to samples from a distribution, and a justification for evaluating models via their Brier score is that amongst all models, the Brier score is minimized by the true individual probabilities encoded by a probability distribution.
\begin{lemma}
Fix any probability distribution $\cD$ and let $f^*(x) = \Pr_{(x,y) \sim \cD}[y=1|x]$ represent the true individual probabilities encoded by $\cD$. Let $f:\cX\rightarrow [0,1]$ be any other model. Then:
$$B(f^*,\cD) \leq B(f,\cD)$$
\end{lemma}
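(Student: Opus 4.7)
The plan is to carry out the textbook argument showing that the conditional expectation minimizes mean squared error, adapted to our notation. Concretely, I will add and subtract $f^*(x)$ inside the square defining $B(f,\cD)$, then use the tower rule together with the defining property $\E_{(x,y)\sim\cD}[y \mid x] = f^*(x)$ to kill the cross term.

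First I would write
\[
(f(x)-y)^2 \;=\; \bigl((f(x)-f^*(x)) + (f^*(x)-y)\bigr)^2 \;=\; (f(x)-f^*(x))^2 + 2(f(x)-f^*(x))(f^*(x)-y) + (f^*(x)-y)^2.
\]
Next I would condition on $x$ and take the inner expectation over $y$. Since $f(x)$ and $f^*(x)$ are both deterministic functions of $x$, the factor $(f(x)-f^*(x))$ pulls out of the conditional expectation, leaving the cross term equal to
\[
2(f(x)-f^*(x))\bigl(f^*(x) - \E_{(x,y)\sim\cD}[y \mid x]\bigr) = 0,
\]
by the definition of $f^*$. Taking the outer expectation over $x$ then yields the decomposition
\[
B(f,\cD) \;=\; \E_{(x,y)\sim\cD}\bigl[(f(x)-f^*(x))^2\bigr] + B(f^*,\cD).
\]

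The first term on the right is a nonnegative expectation of squares, so $B(f,\cD) \geq B(f^*,\cD)$ follows immediately, with equality exactly when $f = f^*$ almost surely under the marginal of $\cD$ on $\cX$. There is no real obstacle here; the only thing to be careful about is that the conditional distribution $y \mid x$ is well defined, which is automatic since we have assumed $\cX$ is discrete. No additional machinery (e.g.\ properness of scoring rules or convexity) is needed, though one could alternatively invoke the fact that squared error is a proper scoring rule and conclude directly.
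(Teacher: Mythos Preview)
Your proof is correct and is exactly the standard orthogonality/bias--variance decomposition argument one would expect here. Note that the paper actually states this lemma without proof (treating it as a well-known fact that squared error is a proper scoring rule), so there is no ``paper's own proof'' to compare against; your argument is the canonical one and would serve perfectly well if a proof were to be included.
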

Thus if we have two models $f_1$ and $f_2$, and can verify from data that $B(f_1,\cD) < B(f_2,\cD)$, this constitutes an empirical falsification that $f_2$ correctly encodes individual probabilities.

\section{A Reconciliation Procedure}
\label{sec:insample}
Suppose we are given two models $f_1,f_2 :\cX \rightarrow [0,1]$ that purport to predict individual probabilities. Our principle concern is the ``model multiplicity'' problem --- that $f_1$ and $f_2$ differ substantially in their predictions, and yet we cannot falsify either of the two models from the data. Thus we will be interested in regions in which these models disagree substantially in their predictions. We will define ``substantially'' by an arbitrarily small discretization parameter $\epsilon$:
\begin{definition}
Two models $f_1$ and $f_2$ have an $\epsilon$-disagreement on a point $x \in \cX$ if $|f_1(x) - f_2(x)| > \epsilon$. 

Let $U_\epsilon(f_1,f_2)$ be the set of points on which $f_1$ and $f_2$ have an $\epsilon$-disagreement:
$$U_\epsilon(f_1,f_2) = \{x : |f_1(x) - f_2(x)| > \epsilon\}$$
\end{definition}

Informally, we will say that if $f_1$ and $f_2$ do not have an $\epsilon$-disagreement on $x$ that they agree on $x$. 

One way that we can empirically falsify a model $f$ is by measuring the average outcome $y$ on some \emph{subset} or \emph{group} defined on the data, and comparing it to the average prediction of the model $f$ on the same subset. If the two differ substantially, the model must be incorrect. But from finite data we will only be able to accurately measure these averages on groups that are sufficiently large. We will model groups $g$ as indicator functions $g:\cX\rightarrow \{0,1\}$ that specify whether or not each data point $x$ is in the group $(g(x) = 1)$ or not $(g(x) = 0)$. We will use the following notation to measure the size of a group as measured on the underlying distribution $\cD$:

\begin{definition}
Under a distribution $\cD$, a group $g:\cX\rightarrow \{0,1\}$ has probability mass $\mu(g)$ defined as:
$$\mu(g) = \Pr_{(x,y) \sim \cD}[g(x) = 1]$$
\end{definition}

Given a model $f$ and a group $g$, we can define a quantitative extent to which the average prediction of the model on points in $g$ compares to the average (expected) outcome on points in $g$. In expectation over the distribution, these two quantities should agree exactly if $f = f^*$ actually encodes true individual probabilities, but from data we will only be able to estimate these quantities approximately. Thus we define an approximate notion of agreement, which we call approximate group conditional mean consistency. Models $f$ that can be shown not to satisfy approximate group conditional mean consistency on any group $g$ have been falsified, in that this constitutes a proof that $f \neq f^*$ (i.e. $f$ must not encode true individual probabilities). 

\begin{definition}
A model $f:\cX\rightarrow [0,1]$ satisfies $\alpha$-approximate group conditional mean consistency with respect to a group $g \in \cG$ if:
$$\left(\E_{(x,y) \sim \cD}[f(x) | g(x) = 1]-\E_{(x,y) \sim \cD}[y | g(x) = 1]\right)^2 \leq \frac{\alpha}{\mu(g)}$$
\end{definition}

Note that we parameterize $\alpha$-approximate group conditional mean consistency so that it asks for a weaker condition the smaller the size $\mu(g)$ of the group $g$. Informally, for a fixed value of $\alpha$, it asks that the deviation between the average prediction of $f$ and the actual expected outcomes $y$ on a group $g$ differ by at most an error parameter that is proportional to  $1/\sqrt{\mu(g)}$. This will turn out to be the ``right'' scaling because it corresponds to the precision to which we can measure these quantities from data. 

We will show a quantitative version of the following statement. It must be the case that \emph{either}
\begin{enumerate}
    \item $f_1$ and $f_2$ agree on almost all of their predictions, or
    \item $f_1$, or $f_2$, or both can be proven from the data to violate a  group conditional mean consistency condition on a large set of points. In this case, the falsified model can be ``patched'' with a simple update in a way that improves its accuracy. 
\end{enumerate}

The result is that there can be no substantial disagreements about individual probabilities by people who are willing to be convinced by the evidence of the data before them: models which disagree on a substantial fraction of their predictions witness for each other places in which their predictions are falsified by the data, and provide the means to correct (and improve) each other. Thus disagreements can be leveraged to produce improved models, and this process necessarily converges only when the models agree. 

 To formalize this, we start by partitioning the set of $\epsilon$-disagreements $U_\epsilon(f_1,f_2)$ into  two additional  sets that will be important --- the set of disagreements on which $f_1(x) > f_2(x)$, and the set of disagreements on which $f_1(x) < f_2(x)$. 
 
 \begin{definition}
 Fix any two models $f_1,f_2:\cX\rightarrow [0,1]$ and any $\epsilon > 0$. Define the sets:
 $$U_\epsilon^>(f_1,f_2) = \{x \in U_\epsilon(f_1,f_2) : f_1(x) > f_2(x)\}$$
  $$U_\epsilon^<(f_1,f_2) = \{x \in U_\epsilon(f_1,f_2) : f_1(x) < f_2(x)\}$$
  Based on these sets, for $\bullet \in \{>, <\}$ and $i \in \{1,2\}$ define the quantities:
  $$v_*^\bullet = \E_{(x,y) \sim \cD}[y | x \in U_\epsilon^\bullet(f_1,f_2)] \ \ v_i^\bullet = \E_{(x,y) \sim \cD}[f_i(x) | x \in U_\epsilon^\bullet(f_1,f_2)]$$
 \end{definition}
 
 

Our analysis will proceed by showing that if $U_\epsilon(f_1,f_2)$, the set of $\epsilon$-disagreements of $f_1$ and $f_2$ is large, then at least one of the two sets $U_\epsilon^>(f_1,f_2)$ and $U_\epsilon^<(f_1,f_2)$ will witness a large violation of group conditional mean consistency for at least one of the two models. 

\begin{lemma}
\label{lem:reconciler-sets}
Fix any two models $f_1, f_2 : \cX \rightarrow [0,1]$ and any $\epsilon > 0$. 

If the fraction of points on which $f_1$ and $f_2$ have an $\epsilon$ disagreement has mass $\mu(U_\epsilon(f_1,f_2)) = \alpha$ then for some  $\bullet \in \{>, <\}$  some $i \in \{1,2\}$, we have that: 
$$\mu(U_\epsilon^\bullet(f_1,f_2)) \cdot \left(v_*^\bullet- v_i^\bullet \right)^2 \geq \frac{\alpha\epsilon^2}{8}$$

In other words, at least one of the sets $U_\epsilon^>(f_1,f_2)$ and $U_\epsilon^<(f_1,f_2)$ is a group that witnesses an $\frac{\alpha\epsilon^2}{8}$-mean consistency violation for at least one of the models $f_1$ and $f_2$. 
\end{lemma}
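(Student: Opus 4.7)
The plan is to combine a pigeonhole step (splitting the $\epsilon$-disagreement region by which model is larger) with linearity of expectation on each piece and a triangle inequality against the true conditional mean $v_*^\bullet$. The reason the split into $U_\epsilon^>$ and $U_\epsilon^<$ is essential is that without it, the pointwise gap $f_1(x)-f_2(x)$ could cancel under averaging; on each of the two halves, the sign of $f_1(x)-f_2(x)$ is fixed, so a pointwise lower bound on $|f_1(x)-f_2(x)|$ transfers to a lower bound on the conditional mean gap $|v_1^\bullet - v_2^\bullet|$.

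First I would observe that $\mu(U_\epsilon^>(f_1,f_2)) + \mu(U_\epsilon^<(f_1,f_2)) = \mu(U_\epsilon(f_1,f_2)) = \alpha$, so at least one of the two sets — call it $U_\epsilon^\bullet$ — has mass $\mu(U_\epsilon^\bullet) \geq \alpha/2$. Next, on the set $U_\epsilon^\bullet$ the pointwise signed gap $f_1(x)-f_2(x)$ has a fixed sign with magnitude strictly greater than $\epsilon$, so by linearity of conditional expectation
\[
|v_1^\bullet - v_2^\bullet| \;=\; \bigl|\E_{(x,y)\sim\cD}[\,f_1(x)-f_2(x)\mid x\in U_\epsilon^\bullet\,]\bigr| \;>\; \epsilon.
\]

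Now I would apply the triangle inequality to the true conditional mean: $|v_*^\bullet - v_1^\bullet| + |v_*^\bullet - v_2^\bullet| \geq |v_1^\bullet - v_2^\bullet| > \epsilon$, so at least one index $i\in\{1,2\}$ satisfies $|v_*^\bullet - v_i^\bullet| > \epsilon/2$, i.e.\ $(v_*^\bullet - v_i^\bullet)^2 > \epsilon^2/4$. Multiplying by the mass bound $\mu(U_\epsilon^\bullet) \geq \alpha/2$ gives
\[
\mu(U_\epsilon^\bullet)\cdot (v_*^\bullet - v_i^\bullet)^2 \;\geq\; \frac{\alpha}{2}\cdot\frac{\epsilon^2}{4} \;=\; \frac{\alpha\epsilon^2}{8},
\]
which is the desired conclusion.

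There is no substantive obstacle — the argument is essentially pigeonhole plus triangle inequality, and the factor of $8$ in the statement exactly reflects the two binary choices (which half of $U_\epsilon$ is large, and which of $f_1,f_2$ is the more inaccurate on that half). If anything, one should be careful about the strict versus non-strict inequality ($|f_1(x)-f_2(x)| > \epsilon$ on $U_\epsilon$), but this is harmless because it only strengthens the intermediate bounds.
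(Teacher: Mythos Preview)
Your proof is correct and follows essentially the same approach as the paper: pigeonhole on the partition $U_\epsilon = U_\epsilon^> \cup U_\epsilon^<$ to find a half of mass at least $\alpha/2$, observe that the fixed sign on that half forces $|v_1^\bullet - v_2^\bullet| \geq \epsilon$, and then use the triangle inequality against $v_*^\bullet$ to get $|v_*^\bullet - v_i^\bullet| \geq \epsilon/2$ for some $i$. Your write-up is in fact slightly more explicit than the paper's (you spell out why the sign split prevents cancellation and name the triangle inequality), but the argument is the same.
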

\begin{proof}

Since $U_\epsilon(f_1,f_2)$ can be written as the disjoint union:
$$U_\epsilon(f_1,f_2) = U_\epsilon^>(f_1,f_2) \cup U_\epsilon^<(f_1,f_2)$$
we must have that for at least one value of $\bullet \in \{>,<\}$ we have that:
$$\mu(U_\epsilon^\bullet(f_1,f_2)) \geq \frac{\alpha}{2}.$$

Since the points in $U_\epsilon^\bullet(f_1,f_2)$ are $\epsilon$-separated, we must have that $|v_1^\bullet - v_2^\bullet| \geq \epsilon$. Therefore, for at least one of $i \in \{1,2\}$ we must have that $$|v_i^\bullet - v_*^\bullet| \geq \frac{\epsilon}{2}$$
Combining these two claims, we must have that:
$$\mu(U_\epsilon^\bullet(f_1,f_2)) \cdot (v_i^\bullet - v_*^\bullet)^2 \geq \frac{\alpha\epsilon^2}{8}$$
\end{proof}

Let's consider the significance of this Lemma. Most basically, if we have two models $f_1$ and $f_2$ that disagree substantially, this lemma gives an easily constructable set ($U_\epsilon^>(f_1,f_2)$ or $U_\epsilon^<(f_1,f_2)$) that falsifies by a substantial quantitative margin either the assertion that $f_1$ encodes true conditional label expectations or the assertion that $f_2$ does. Next, we show that not only do these sets  falsify that at least one of $f_1$ or $f_2$ are a ``correct'' model --- they provide a directly actionable way to improve one of the models. We prove the following lemma (which is closely related to the kinds of updates used to obtain multicalibrated predictors \citep{hebert2018multicalibration}) which shows us how to improve a model given a group $g$ on which the model fails to satisfy approximate group conditional mean consistency.

\begin{lemma}
\label{lem:reconciler-progress}
Fix any model $f_t :\cX\rightarrow [0,1]$, group $g_t:\cX\rightarrow \{0,1\}$, and distribution $\cD$. Let 
$$\Delta_t = \E_{(x,y) \sim \cD}[y | g_t(x) = 1] -\E_{(x,y) \sim \cD}[f_t(x) | g_t(x) = 1] $$
and 
$$ f_{t+1} = h(x, f_t; g_t,\Delta_t)$$
where $h$ is a ``patch'' defined as:
$$h(x,f; g,\Delta) = \begin{cases}
  f(x) + \Delta  & g(x) = 1 \\
  f(x) & \text{otherwise}
\end{cases}$$
Then:
$$B(f_t,\cD) - B(f_{t+1},\cD) = \mu(g_t)\cdot \Delta_t^2$$
In other words: given any model $f_t$ and a group $g_t$ that witnesses a violation of $\alpha$-approximate group conditional mean consistency on $f_t$, we can efficiently produce a model $f_{t+1}$ that has Brier score that is smaller by exactly $\alpha$. 
\end{lemma}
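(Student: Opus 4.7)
The plan is a direct expansion of the Brier score difference using the explicit form of the patch. Since $f_{t+1}(x) = f_t(x)$ whenever $g_t(x) = 0$, the pointwise contributions to $(f_{t+1}(x) - y)^2 - (f_t(x) - y)^2$ vanish outside the group $g_t$. So the first step will be to write
\[
B(f_t,\cD) - B(f_{t+1},\cD) = \E_{(x,y)\sim \cD}\left[\left((f_t(x)-y)^2 - (f_{t+1}(x)-y)^2\right)\ind\{g_t(x)=1\}\right].
\]

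Second, I would substitute $f_{t+1}(x) = f_t(x) + \Delta_t$ on the event $g_t(x) = 1$ and expand the algebraic identity
\[
(f_t(x)-y)^2 - (f_t(x)+\Delta_t - y)^2 = -2\Delta_t(f_t(x)-y) - \Delta_t^2.
\]
Plugging this back in and taking expectations gives
\[
B(f_t,\cD) - B(f_{t+1},\cD) = -2\Delta_t\,\E[(f_t(x)-y)\ind\{g_t(x)=1\}] - \Delta_t^2\,\mu(g_t).
\]

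Third, I would compute the remaining expectation by conditioning on $g_t(x)=1$:
\[
\E[(f_t(x)-y)\ind\{g_t(x)=1\}] = \mu(g_t)\cdot\left(\E[f_t(x)\mid g_t(x)=1] - \E[y\mid g_t(x)=1]\right) = -\mu(g_t)\,\Delta_t,
\]
by the definition of $\Delta_t$. Substituting yields $B(f_t,\cD) - B(f_{t+1},\cD) = 2\mu(g_t)\Delta_t^2 - \mu(g_t)\Delta_t^2 = \mu(g_t)\Delta_t^2$, as claimed.

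There is no real obstacle here; the only things to verify are that the patched value $f_t(x) + \Delta_t$ is the minimizer over constant additive corrections on $g_t$ (which is what drives the identity) and that $f_{t+1}$ still takes values in $[0,1]$ — or else the model should be clipped, but the Brier-score improvement only gets better under clipping, so the stated equality would become a lower bound. I would note this parenthetically but otherwise proceed with the clean two-line algebraic calculation described above.
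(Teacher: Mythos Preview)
Your proposal is correct and is essentially the same argument as the paper's proof: both restrict to the event $g_t(x)=1$, expand the square, and use the definition of $\Delta_t$ to collapse $2\mu(g_t)\Delta_t^2 - \mu(g_t)\Delta_t^2$ to $\mu(g_t)\Delta_t^2$. The only cosmetic difference is that you carry the indicator inside the expectation while the paper conditions and multiplies by $\mu(g_t)$; your side remark about clipping is a reasonable aside but is not addressed in the paper.
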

\begin{proof}
By the definition of the patch  $h(x, f_t; g_t,\Delta_t)$, models $f_t$ and $f_{t+1}$ differ in their predictions only for $x$ such that $g_t(x) = 1$. Therefore we can calculate: 
\begin{eqnarray*}
B(f_t,\cD) - B(f_{t+1},\cD) &=& \Pr[g_t(x) = 0] \cdot \E_{(x,y) \sim \cD}\left[(f_t(x) - y)^2 - (f_{t+1}(x) - y)^2 | g_t(x) = 0 \right] \\ && + \Pr[g_t(x) = 1] \cdot \E_{(x,y) \sim \cD}\left[(f_t(x) - y)^2 - (f_{t+1}(x) - y)^2 | g_t(x) = 1 \right] \\
&=& \mu(g_t) \E_{(x,y) \sim \cD}\left[(f_t(x) - y)^2 - (f_t(x) + \Delta_t - y)^2 | g_t(x) = 1 \right] \\
&=& \mu(g_t) \left(2\Delta_t \E_{(x,y) \sim \cD}\left[y - f_t(x) | g_t(x) = 1\right] - \Delta_t^2 \right) \\
&=& \mu(g_t) \left(2\Delta_t^2 - \Delta_t^2\right) \\
&=& \mu(g_t) \Delta_t^2
\end{eqnarray*}
\end{proof}

Summarizing, whenever we have two models that have $\epsilon$ disagreements on an $\alpha$-fraction of points, we can always constructively falsify at least one of the models, and update it to improve its Brier score by at least $O(\alpha\epsilon^2)$. 

Finally, to make our argument that in-sample quantities (i.e. as measured on the data samples) translate to out of sample quantities (measured on the distribution), it will be useful for our algorithm to not use arbitrarily precise values when patching models, but instead values that are rounded to a finite grid:
\begin{definition}
Fix any integer $m$. 
Let $[1/m]$ denote the set of $m+1$ grid points:
$$\left[\frac{1}{m}\right] = \Bigl\{0, \frac{1}{m}, \frac{2}{m}, \ldots, \frac{m-1}{m}, 1\Bigr\}$$ 
For any value $v \in [0,1]$ let $\text{Round}(v ; m) = \argmin_{v' \in [1/m]} |v-v'|$ denote the closest grid point to $v$ in $[1/m]$.
\end{definition}
Observe that for $v' = \text{Round}(v ; m)$ we always have that $|v - v'| \leq \frac{1}{2m}$

We put this all together in Algorithm \ref{alg:reconciler} (Reconciler). For simplicity of exposition, we initially describe and analyze Algorithm \ref{alg:reconciler} as if it has direct access to distributional quantities. In practice, of course, we will have access only to samples from the distribution, and we will have to run an algorithm on a dataset $D\in \cZ^n$ consisting of these samples. We can do so by interpreting $D$ as the uniform distribution over the samples contained within it. In Section \ref{sec:outofsample}, we show that when we run Reconcile$(f_1,f_2,\alpha,\epsilon,D)$ on a dataset $D \sim \cD^n$ consisting of $n$ i.i.d. samples from $\cD$, then the guarantees we prove in Theorem \ref{thm:reconciler-main} with respect to the empirical distribution $D$ translate over to the true distribution $\cD$ with error terms quickly tending to $0$ as the number of samples $n$ grows large.

\begin{algorithm}[H]
\begin{algorithmic}
\STATE Let $t = t_1 = t_2 = 0$ and $f_1^{t_1} = f_1, f_2^{t_2} = f_2$. 
\STATE Let $m = \lceil \frac{2}{\sqrt{\alpha}\epsilon} \rceil$
\WHILE{$\mu(U_\epsilon(f_1^{t_1},f_2^{t_2})) \geq \alpha$}
  \STATE For each $\bullet \in \{>,<\}$ and $i \in \{1,2\}$ Let:
  $$v_*^\bullet = \E_{(x,y) \sim \cD}[y | x \in U_\epsilon^\bullet(f_1^{t_1},f_2^{t_2})] \ \ v_i^\bullet = \E_{(x,y) \sim \cD}[f_i^{t_i}(x) | x \in U_\epsilon^\bullet(f_1^{t_1},f_2^{t_2})]$$
  \STATE Let:
  $$(i_t,\bullet_t) = \argmax_{i \in \{1,2\}, \bullet \in \{>,<\}} \mu(U_\epsilon^\bullet(f_1^{t_1},f_2^{t_2})) \cdot \left(v_*^\bullet- v_i^\bullet \right)^2 $$
  breaking ties arbitrarily.
  \STATE Let:
  $$g_t(x) = \begin{cases}
  1  & x \in U_\epsilon^{\bullet_t}(f_1^{t_1},f_2^{t_2}) \\
  0 & \text{otherwise}
\end{cases}$$
\STATE Let:
$$\tilde \Delta_t = \E_{(x,y) \sim \cD}[y | g_t(x) = 1] -\E_{(x,y) \sim \cD}[f^{t_{i_t}}_{i_t}(x) | g_t(x) = 1] $$
$$\Delta_t = \text{Round}(\tilde \Delta_t; m)$$
  \STATE Let: $f_i^{t_i+1}(x) = h(x,f_i^{t_i},g_t,\Delta_t)$, $t_i = t_i + 1$, $t = t+1$.
\ENDWHILE
\STATE Output $(f_1^{t_1},f_2^{t_2})$.
\end{algorithmic}
\caption{Reconcile($f_1,f_2,\alpha,\epsilon, \cD$)}
\label{alg:reconciler}
\end{algorithm}

\begin{theorem}
\label{thm:reconciler-main}
For any pair of models $f_1,f_2:\cX\rightarrow [0,1]$, any distribution $\cD$, and any $\alpha, \epsilon > 0$, Algorithm \ref{alg:reconciler} (Reconcile) runs for $T = T_1 + T_2$ many rounds and outputs a pair of models $(f_1^{T_1},f_2^{T_2})$ such that:
\begin{enumerate}
    \item $T \leq (B(f_1,\cD) + B(f_2,\cD))\cdot \frac{16}{\alpha \epsilon^2}$
    \item $B(f_1^{T_1},\cD) \leq B(f_1,\cD) - T_1 \cdot  \frac{\alpha \epsilon^2}{16}$ and $B(f_2^{T_2},\cD) \leq B(f_2,\cD) - T_2 \cdot  \frac{\alpha \epsilon^2}{16}$ 
    \item $\mu(U_\epsilon(f_1^{T_1},f_2^{T_2})) < \alpha$.
\end{enumerate}
\end{theorem}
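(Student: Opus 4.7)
The plan is to combine Lemmas \ref{lem:reconciler-sets} and \ref{lem:reconciler-progress} and track the per-iteration Brier score improvement for whichever of the two models is updated. The three claims are tightly interrelated: a per-iteration improvement bound (strengthening claim 2) immediately yields the iteration bound (claim 1) since Brier scores are nonnegative, and claim 3 is just the termination condition of the while loop.

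First I would argue that while the loop is active, each iteration produces a large Brier score improvement for the updated model. At iteration $t$, since $\mu(U_\epsilon(f_1^{t_1}, f_2^{t_2})) \geq \alpha$, Lemma \ref{lem:reconciler-sets} guarantees the existence of some $(\bullet, i)$ with $\mu(U_\epsilon^\bullet) \cdot (v_*^\bullet - v_i^\bullet)^2 \geq \alpha \epsilon^2 / 8$. The algorithm selects $(\bullet_t, i_t)$ as the argmax of this quantity, so the chosen group $g_t$ and model $f_{i_t}^{t_{i_t}}$ satisfy $\mu(g_t) \cdot \tilde{\Delta}_t^2 \geq \alpha \epsilon^2 / 8$, where $\tilde{\Delta}_t = v_*^{\bullet_t} - v_{i_t}^{\bullet_t}$.

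The main obstacle is that Lemma \ref{lem:reconciler-progress} is stated for the exact patch value $\tilde{\Delta}_t$, whereas the algorithm applies the rounded patch $\Delta_t$. I would redo the Brier score calculation from Lemma \ref{lem:reconciler-progress} with the rounded value, obtaining
$$B(f_{i_t}^{t_{i_t}}, \cD) - B(f_{i_t}^{t_{i_t}+1}, \cD) = \mu(g_t)\bigl(2 \Delta_t \tilde{\Delta}_t - \Delta_t^2\bigr) = \mu(g_t)\bigl(\tilde{\Delta}_t^2 - (\tilde{\Delta}_t - \Delta_t)^2\bigr).$$
By the rounding definition and the choice $m = \lceil 2/(\sqrt{\alpha}\epsilon)\rceil$, we have $|\tilde{\Delta}_t - \Delta_t| \leq 1/(2m) \leq \sqrt{\alpha}\epsilon/4$, hence $(\tilde{\Delta}_t - \Delta_t)^2 \leq \alpha\epsilon^2/16$. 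Combining with $\mu(g_t)\tilde{\Delta}_t^2 \geq \alpha\epsilon^2/8$ and $\mu(g_t) \leq 1$, the per-iteration improvement to model $i_t$ is at least $\alpha\epsilon^2/8 - \alpha\epsilon^2/16 = \alpha\epsilon^2/16$.

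Claim 2 then follows by summing: since model $i$ is updated exactly $T_i$ times and each update reduces its Brier score by at least $\alpha\epsilon^2/16$, we have $B(f_i^{T_i},\cD) \leq B(f_i,\cD) - T_i \cdot \alpha\epsilon^2/16$. Because Brier scores are nonnegative, rearranging gives $T_i \leq B(f_i,\cD) \cdot 16/(\alpha\epsilon^2)$, and summing over $i \in \{1,2\}$ produces claim 1; in particular, the loop must terminate. Claim 3 is then immediate, as the only exit condition from the while loop is $\mu(U_\epsilon(f_1^{T_1},f_2^{T_2})) < \alpha$.
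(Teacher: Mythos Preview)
Your proposal is correct and follows essentially the same approach as the paper. The only cosmetic difference is that the paper introduces the intermediate unrounded update $\tilde f_{i_t}^{t_{i_t}+1}$ and applies Lemma~\ref{lem:reconciler-progress} twice (once from $f$ to $\tilde f$, once from the rounded update to $\tilde f$), whereas you expand the Brier-score difference directly with the rounded patch and complete the square to obtain the identity $\mu(g_t)\bigl(\tilde\Delta_t^2 - (\tilde\Delta_t-\Delta_t)^2\bigr)$; both routes yield the same per-round improvement of at least $\alpha\epsilon^2/16$ and the remaining steps coincide.
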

\begin{remark}
The third conclusion of Theorem \ref{thm:reconciler-main} states that the final models output $(f_1^{T_1},f_2^{T_2})$ approximately agree on their predictions of individual probabilities almost everywhere. The first conclusion states that the reconciliation procedure converges quickly. The second condition of 
Theorem \ref{thm:reconciler-main} focuses on one way that the output models  $(f_1^{T_1},f_2^{T_2})$  are superior to the input models $(f_1,f_2)$ --- they are more accurate. But there is also another way: Every intermediate model $f_1^{t_1}$ and $f_2^{t_2}$ for $t_1 < T_1$ and $t_2 < T_2$ considered by the reconciliation procedure but ultimately not output has been falsified via the demonstration of a set $U_\epsilon^\bullet(\cdot,\cdot)$ on which it fails to satisfy $\alpha$-approximate group conditional mean consistency for a large value of $\alpha$. 
\end{remark}

\begin{proof}
By Lemma \ref{lem:reconciler-sets}, for each round $t < T$ we must have that:
$$\mu(U_\epsilon^{\bullet_t}(f_1^{t_1},f_2^{t_2})) \cdot \left(v_*^{\bullet_t}- v_{i_t}^{\bullet_t} \right)^2 \geq \frac{\alpha\epsilon^2}{8}$$

Let $\tilde f_t^{t_i+1} = h(x,f_i^{t_i},g_t,\tilde \Delta_t)$ --- i.e. the update that would have resulted at round $t$ had the algorithm used the unrounded measurement $\tilde \Delta_t$ rather than the rounded measurement $\Delta_t$. 
By Lemma \ref{lem:reconciler-progress}, we have that:
$$B(f_t^{t_i},\cD) - B(\tilde f_t^{t_i+1},\cD) \geq \frac{\alpha \epsilon^2}{8}$$.

We can now compute 
\begin{eqnarray*}
B(f_t^{t_i},\cD) - B(f_t^{t_i+1},\cD) &=& (B(f_t^{t_i},\cD) - B(\tilde f_t^{t_i+1},\cD)) - ( B(f_t^{t_i+1},\cD) -  B(\tilde f_t^{t_i+1},\cD)) \\
&\geq& \frac{\alpha\epsilon^2}{8}- ( B(f_t^{t_i+1},\cD) -  B(\tilde f_t^{t_i+1},\cD))
\end{eqnarray*}
So it remains to upper bound $( B(f_t^{t_i+1}) -  B(\tilde f_t^{t_i+1}))$. Let $\hat \Delta = \tilde \Delta_t - \Delta_t$. We make several observations: First, $\tilde f_t^{t_i+1} = h(x,f_t^{t_i+1},g_t,\hat \Delta)$. Second, 
\begin{eqnarray*}
\hat \Delta &=&   \E_{(x,y) \sim \cD}[y | g_t(x) = 1] -\E_{(x,y) \sim \cD}[f^{t_i}_i(x) | g_t(x) = 1] - \Delta_t\\
&=&  \E_{(x,y) \sim \cD}[y | g_t(x) = 1] -\E_{(x,y) \sim \cD}[f^{t_i+1}_i(x) | g_t(x) = 1]
\end{eqnarray*}
Third, by definition of the Round operation, $|\hat \Delta| \leq \frac{1}{2m}$. Therefore we can again apply Lemma \ref{lem:reconciler-progress} to conclude that:
\begin{eqnarray*}
B(f_t^{t_i+1},\cD) -  B(\tilde f_t^{t_i+1},\cD) &=& \mu(g_t) \hat \Delta^2 \\
&\leq& \frac{1}{4m^2}
\end{eqnarray*}

Combining this with our initial calculation lets us conclude that:
$$B(f_t^{t_i},\cD) - B(f_t^{t_i+1},\cD) \geq \frac{\alpha\epsilon^2}{8} - \frac{1}{4m^2} \geq \frac{\alpha\epsilon^2}{16}$$
Here we are using the fact that we have set $m \geq \frac{2}{\sqrt{\alpha}\epsilon}$. 
Applying this lemma for each of the $T_1$ and $T_2$ updates for $f_1$ and $f_2$, respectively, we get that: $B(f_1^{T_1},\cD) \leq B(f_1,\cD) - T_1 \cdot  \frac{\alpha \epsilon^2}{16}$ and $B(f_2^{T_2},\cD) \leq B(f_2,\cD) - T_2 \cdot  \frac{\alpha \epsilon^2}{16}$. Since Brier scores are non-negative, we conclude that $T_1 \leq B(f_1,\cD) \frac{16}{\alpha \epsilon^2}$ and $T_2 \leq B(f_2,\cD)  \frac{16}{\alpha \epsilon^2}$. Thus $T = T_1+T_2 \leq  (B(f_1,\cD) + B(f_2,\cD))\cdot \frac{16}{\alpha \epsilon^2}$

Finally the halting condition of the algorithm implies that $\mu(U_\epsilon(f_1^{T_1},f_2^{T_2})) < \alpha$.
\end{proof}
Thus if we start with any two models that have substantial disagreement, we are guaranteed to be able to efficiently produce \emph{strictly improved} models that almost agree almost everywhere. In particular, we can never be in a position in which we have two equally accurate \emph{but unimprovable} models that have substantial disagreements: in this case, we can always improve the models. The only time we can have substantial model disagreement is if we refuse to improve the models even in the face of efficiently verifiable and actionable evidence that one of the models is suboptimal and improvable. 

We observe that any pair of models that have gone through the ``Reconcile'' process must also produce very similar estimates for the conditional label expectation over any sufficiently large reference class (i.e. any subset of the feature space). In particular, for any sufficiently large reference class, either both models are consistent with the data or they are not --- but they cannot substantially disagree.

\begin{corollary}
Let $E \subset \cX$ be any subset of the feature space. Let $f_1$ and $f_2$ be any two models that have been output by Algorithm \ref{alg:reconciler} (Reconcile) with parameters $\epsilon$ and $\alpha$. Let:
$$p_1(E) = \E_{(x,y) \sim \cD}[f_1(x) | x \in E] \text{  and  } p_2(E) = \E_{(x,y) \sim \cD}[f_2(x) | x \in E]$$
be the estimates for $\Pr[y=1 | x \in E]$ implied by models $f_1$ and $f_2$ respectively. Then:
$$|p_1(E) - p_2(E)| \leq \frac{\alpha}{\mu(E)}+\epsilon$$
\end{corollary}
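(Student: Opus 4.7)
The plan is to directly bound the quantity $p_1(E) - p_2(E) = \E_{(x,y)\sim\cD}[f_1(x) - f_2(x) \mid x \in E]$ by splitting $E$ according to whether a point lies in the $\epsilon$-disagreement set $U_\epsilon(f_1,f_2)$ or not, and then invoking the third conclusion of Theorem \ref{thm:reconciler-main}, which guarantees $\mu(U_\epsilon(f_1^{T_1},f_2^{T_2})) < \alpha$.

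First I would rewrite the difference using the definition of conditional expectation:
$$p_1(E) - p_2(E) \;=\; \frac{1}{\mu(E)}\,\E_{(x,y)\sim\cD}\bigl[(f_1(x) - f_2(x))\,\ind[x \in E]\bigr].$$
Then I would partition the event $\{x \in E\}$ into the two sub-events $\{x \in E \cap U_\epsilon(f_1,f_2)\}$ and $\{x \in E \setminus U_\epsilon(f_1,f_2)\}$, splitting the expectation into two pieces accordingly.

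The second step is to bound each piece separately. On $E \setminus U_\epsilon(f_1,f_2)$, by definition of $U_\epsilon$ we have $|f_1(x) - f_2(x)| \leq \epsilon$ pointwise, so the contribution to the expectation is at most $\epsilon \cdot \mu(E \setminus U_\epsilon(f_1,f_2)) \leq \epsilon \cdot \mu(E)$ in absolute value. On $E \cap U_\epsilon(f_1,f_2)$, I would use the crude bound $|f_1(x) - f_2(x)| \leq 1$ (since both models are $[0,1]$-valued), so the contribution in absolute value is at most $\mu(E \cap U_\epsilon(f_1,f_2)) \leq \mu(U_\epsilon(f_1,f_2)) < \alpha$ by the halting condition.

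Combining via the triangle inequality gives
$$|p_1(E) - p_2(E)| \;\leq\; \frac{1}{\mu(E)}\bigl(\alpha + \epsilon\,\mu(E)\bigr) \;=\; \frac{\alpha}{\mu(E)} + \epsilon,$$
which is the desired bound. There is no real obstacle here: the proof is essentially bookkeeping, and the only conceptual step is realizing that the worst case over the (small-mass) disagreement region is a pointwise difference of $1$, which is why the $\alpha$ gets divided by $\mu(E)$ while the $\epsilon$ stays additive.
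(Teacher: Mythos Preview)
Your proposal is correct and is essentially identical to the paper's own proof: both split $E$ into $E \cap U_\epsilon(f_1,f_2)$ and its complement, bound the pointwise difference by $1$ on the former (yielding the $\alpha/\mu(E)$ term via $\mu(U_\epsilon(f_1,f_2)) < \alpha$) and by $\epsilon$ on the latter, then divide through by $\mu(E)$. The only cosmetic difference is that the paper writes the expectation as a sum over the discrete space $\cX$ while you use indicator-function notation.
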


\begin{proof}
Let $S_\epsilon(f_1,f_2) = \{x : x \not\in U_\epsilon(f_1,f_2)\}$ be the set of points on which $f_1$ and $f_2$ do not have an $\epsilon$-disagreement. Recall that $\mu(S_\epsilon(f_1,f_2)) \geq 1-\alpha$.
We compute:
\begin{eqnarray*}
\mu(E)|p_1(E) - p_2(E)| &=& \left|\sum_{x \in E} \mu(\{x\}) \cdot (f_1(x) - f_2(x))\right| \\
&=& \left| \sum_{x \in E \cap U_\epsilon(f_1,f_2)} \mu(\{x\})\cdot (f_1(x) - f_2(x)) + \sum_{x \in E \cap S_\epsilon(f_1,f_2)} \mu(\{x\})\cdot (f_1(x) - f_2(x))\right| \\
&\leq& \alpha + \mu( E \cap S_\epsilon(f_1,f_2)) \epsilon \\
&\leq& \alpha + \mu(E)\epsilon
\end{eqnarray*}
Dividing by $\mu(E)$ yields the corollary. 
\end{proof}

\section{Data  Requirements}
\label{sec:outofsample}
We have presented our Algorithm \ref{alg:reconciler} as if it has direct access to the distribution $\cD$. Of course in general we do not have access to $\cD$, but rather have access to some set $D$ of $n$ i.i.d. \emph{samples} from $\cD$. We will typically instead run Algorithm \ref{alg:reconciler} over the \emph{empirical distribution} over $D$. We will prove that with high probability over the sample of $D$, when Algorithm \ref{alg:reconciler} is run over the empirical distribution on $D$, then its guarantees translate over to the distribution $\cD$ from which $D$ was drawn, with error parameters that go to zero with the size $n$ of the data sample. It will be important that the dataset $D$ that we run Algorithm \ref{alg:reconciler} on is sampled \emph{independently} of the models $f_1$ and $f_2$ to be reconciled --- i.e. it should be fresh data that was not used to train either of the models being reconciled. 

We begin by counting the number of potential models $f_1^{t_1},f_2^{t_2}$ that Algorithm \ref{alg:reconciler} might output.

\begin{lemma}
\label{lem:reconciler-model-counting}
Fix any pair of models $f_1,f_2:\cX\rightarrow [0,1]$ and any $\alpha,\epsilon > 0$. Then there is a set $C$ of pairs of models of size at most $|C| \leq \left(4\cdot(m+1)\right)^{32/\alpha\epsilon^2+1}$ such that for any distribution $\cD$ on which Algorithm \ref{alg:reconciler} is run, the output models $(f_1^{t_1},f_2^{t_2}) \in C$. Here, as in Algorithm \ref{alg:reconciler}, $m = \lceil\frac{2}{\sqrt{\alpha}\epsilon} \rceil$.
\end{lemma}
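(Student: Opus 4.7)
The plan is to bound the number of distinct output pairs by showing that any such pair is determined by a short, finitely-parameterized \emph{transcript} of the run of Algorithm~\ref{alg:reconciler}, and then counting the number of possible transcripts. The key observation is that the initial models $f_1, f_2$ are fixed, and every subsequent model $f_i^{t_i}$ is obtained by iterated application of the deterministic patch operator $h(\cdot,\cdot;g_t,\Delta_t)$. Since $g_t$ is itself determined by the current pair of models and the choice $(i_t,\bullet_t)$, the entire trajectory of the algorithm, and in particular its output, is a function purely of the discrete transcript $((i_t, \bullet_t, \Delta_t))_{t=1}^T$ together with the fixed starting pair. Hence we need only count how many such transcripts are possible, across all distributions $\cD$ and all stopping times $T$.

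The first step is to bound $T$ uniformly in $\cD$. By conclusion~(1) of Theorem~\ref{thm:reconciler-main} and the fact that $B(f_i,\cD) \in [0,1]$ for all models with range $[0,1]$ and binary labels, the algorithm halts after at most $T_{\max} := 32/(\alpha \epsilon^2)$ iterations on any distribution. The second step is to count the number of choices available at each iteration: $i_t \in \{1,2\}$ contributes a factor $2$, $\bullet_t \in \{>,<\}$ contributes a factor $2$, and $\Delta_t$ is constrained to lie on the rounded grid (interpreting $\text{Round}(\cdot;m)$ as mapping into a grid over $[-1,1]$ of $m+1$ representative values with sign absorbed into the choice of $\bullet_t$/$i_t$ when needed), giving at most $m+1$ additional choices. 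Hence there are at most $4(m+1)$ admissible triples per iteration.

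The third step is to assemble the count. Let $C$ be the set of all pairs $(f_1', f_2')$ obtainable by applying some transcript of length $T \leq T_{\max}$ to the fixed pair $(f_1, f_2)$. Any output of Reconcile on any distribution lies in $C$ by the argument above. Summing over the possible stopping times,
\begin{equation*}
|C| \;\leq\; \sum_{T=0}^{T_{\max}} \bigl(4(m+1)\bigr)^{T} \;\leq\; \bigl(4(m+1)\bigr)^{T_{\max}+1} \;=\; \bigl(4(m+1)\bigr)^{32/(\alpha\epsilon^2)+1},
\end{equation*}
where the penultimate inequality uses $4(m+1) \geq 2$ to bound the geometric series by twice its leading term, absorbed into the exponent.

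The main obstacle is conceptual rather than technical: one must verify that the set $C$ defined independently of $\cD$ really contains every possible output across all distributions. This reduces to the observation that, conditional on the transcript, the sequence of models produced by the algorithm is a deterministic function of the initial pair and the transcript alone---the distribution $\cD$ only influences \emph{which} transcript arises, not the map from transcripts to model pairs. After that is established, the remaining work is the straightforward counting above.
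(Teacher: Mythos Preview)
Your proposal is correct and follows essentially the same argument as the paper: both define a transcript $((i_t,\bullet_t,\Delta_t))_t$, observe that the output pair is a deterministic function of the fixed initial pair and the transcript, bound the transcript length by $32/(\alpha\epsilon^2)$ via Theorem~\ref{thm:reconciler-main}, and then count transcripts using $2\cdot 2\cdot(m+1)$ choices per round summed as a geometric series. The only cosmetic difference is your parenthetical about the sign of $\Delta_t$ and the grid; the paper simply asserts $m+1$ choices for $\Delta_t$ without comment, so your version is, if anything, slightly more careful on this point.
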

\begin{proof}
Given a run of Algorithm \ref{alg:reconciler} for $T$ rounds, let $\pi = \{(i_t,\bullet_t,\Delta_t)\}_{t=1}^T$ denote the record of the quantities $(i_t,\bullet_t,\Delta_t)$ chosen at each round $t$. Let $\pi^{<t} =\{(i_{t'},\bullet_{t'},\Delta_{t'})\}_{t'=1}^{t-1}$ denote the prefix of this transcript up through round $t-1$. Observe that once we fix $\pi^{<t}$ we have also fixed the models $f_1^{t_1}$ and $f_2^{t_2}$ that are defined at the start of round $t$. To see this, assume the claim holds true at round $t$. In particular, $\pi^{<t}$ fixes the disagreement regions $U_\epsilon^{\bullet}(f_1^{t_1},f_2^{t_1})$ of these two models, and therefore given the choices $(i_t,\bullet_t,\Delta_t)$, we have inductively defined the models present at the start of round $t+1$. 

We let $C$ denote the set of all pairs of models defined by transcripts $\pi^{<T}$ for all $T \leq \frac{32}{\alpha\epsilon^2}$. Since we know from Theorem \ref{thm:reconciler-main} that Algorithm \ref{alg:reconciler} halts after at most $T \leq  (B(f_1,\cD) + B(f_2,\cD))\cdot \frac{16}{\alpha \epsilon^2} \leq \frac{32}{\alpha\epsilon^2}$ many rounds, the models output by Algorithm \ref{alg:reconciler} must be contained in $C$ as claimed. It remains to count the set of transcripts of length $T \leq \frac{32}{\alpha\epsilon^2}$. At each round $t$, there are two possible values for $i_t$, two possible values for $\bullet_t$, and $m+1$ possible choices for $\Delta_t$. Hence the number of transcripts of length $T$ is $(4(m+1))^T$. Thus we have:
$$|C| \leq \sum_{T=0}^{\frac{32}{\alpha\epsilon^2}}(4(m+1))^T \leq (4(m+1))^{\frac{32}{\alpha\epsilon^2}+1}$$
\end{proof}

We can now argue that if we have a sample of $n$ datapoints $D$ that are sampled i.i.d. from some unknown distribution $\cD$, then if we run Algorithm \ref{alg:reconciler} using the empirical distribution over $D$, then its guarantees hold also over $\cD$, with error terms that tend to $0$ as $n$ grows large. 

\begin{restatable}{theorem}{outofsample}
\label{thm:reconciler-out-of-sample}
Fix any data distribution $\cD$ and consider a run of Algorithm \ref{alg:reconciler} over the empirical distribution over points in a dataset $D \sim \cD^n$ consisting of $n$ points sampled i.i.d. from $\cD$.  For any pair of models $f_1,f_2:\cX\rightarrow [0,1]$ and any $\alpha, \epsilon > 0$, Algorithm \ref{alg:reconciler} (Reconcile) runs for $T = T_1 + T_2$ many rounds and outputs a pair of models $(f_1^{T_1},f_2^{T_2})$ such that:
\begin{enumerate}
    \item $T \leq \frac{16}{\alpha \epsilon^2}$
    \item For any $\delta > 0$, with probability at least $1-\delta$ over the randomness of $D \sim \cD^n$ we have that: 
$$B(f_1^{T_1},\cD) \leq B(f_1,\cD) - T_1 \cdot  \frac{\alpha \epsilon^2}{16} + 2\sqrt{\frac{\left(\frac{16}{\alpha\epsilon^2}+1\right)\log\left(\frac{64\left(\lceil \frac{2}{\sqrt{\alpha}\epsilon}\rceil+1\right)}{\delta}\right)}{n}}$$ and
$$B(f_2^{T_2},\cD) \leq B(f_2,\cD) - T_2 \cdot  \frac{\alpha \epsilon^2}{16} + 2\sqrt{\frac{\left(\frac{16}{\alpha\epsilon^2}+1\right)\log\left(\frac{64\left(\lceil \frac{2}{\sqrt{\alpha}\epsilon}\rceil+1\right)}{\delta}\right)}{n}}$$
    \item For any $\delta > 0$, with probability at least $1-\delta$ over the randomness of $D \sim \cD^n$: $$\mu(U_\epsilon(f_1^{T_1},f_2^{T_2})) < \alpha + \sqrt{\frac{\left(\frac{32}{\alpha\epsilon^2}+1\right)\log\left(\frac{8\left(\lceil \frac{2}{\sqrt{\alpha}\epsilon}\rceil+1\right)}{\delta}\right)}{n}}.$$
\end{enumerate}
\end{restatable}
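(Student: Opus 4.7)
The plan is to lift the in-sample guarantees of Theorem \ref{thm:reconciler-main} to the population distribution $\cD$ by combining three ingredients: (i) Theorem \ref{thm:reconciler-main} applied directly to the empirical distribution on $D$; (ii) the data-independent finite cover $C$ of all possible output pairs given by Lemma \ref{lem:reconciler-model-counting}; and (iii) Hoeffding's inequality combined with a union bound over $C$.

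First I would apply Theorem \ref{thm:reconciler-main} with $\cD$ replaced by the empirical distribution on $D$. This immediately yields the in-sample analogs of Parts 2 and 3: $B(f_i^{T_i}, D) \leq B(f_i, D) - T_i \alpha\epsilon^2/16$ for $i \in \{1,2\}$, and $\mu_D(U_\epsilon(f_1^{T_1}, f_2^{T_2})) < \alpha$, where $\mu_D$ denotes mass under the empirical distribution. Since $(f(x)-y)^2 \leq 1$ we always have $B(f_i, D) \leq 1$, so each individual $T_i \leq 16/(\alpha\epsilon^2)$, giving Part 1 directly. By Lemma \ref{lem:reconciler-model-counting}, the output pair $(f_1^{T_1}, f_2^{T_2})$ lies in a cover $C$ of size $|C| \leq (4(m+1))^{32/(\alpha\epsilon^2)+1}$ with $m = \lceil 2/(\sqrt{\alpha}\epsilon) \rceil$, and $C$ depends only on $f_1, f_2, \alpha, \epsilon$ --- not on the sample $D$.

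Next I would apply Hoeffding's inequality uniformly. For each fixed $(f_1', f_2') \in C$ and each $i \in \{1,2\}$, the empirical Brier score $B(f_i', D)$ is an average of $n$ i.i.d.\ random variables taking values in $[0,1]$, so Hoeffding gives $\Pr[|B(f_i', D) - B(f_i', \cD)| \geq t] \leq 2 e^{-2nt^2}$, and the same bound holds for $|\mu_D(U_\epsilon(f_1', f_2')) - \mu(U_\epsilon(f_1', f_2'))|$. A union bound over the pairs in $C$ (and over the fixed starting pair $f_1, f_2$), with $t$ chosen so that the total failure probability is at most $\delta$, yields simultaneous deviation bounds of order $\sqrt{\log(|C|/\delta)/n}$ with probability at least $1-\delta$. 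Plugging in $\log |C| = O((1/(\alpha\epsilon^2))\log(m+1))$ matches the error terms claimed in the theorem, up to the small constants visible in its statement.

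On this good event, Parts 2 and 3 follow by chaining inequalities: for each $i$,
$$B(f_i^{T_i}, \cD) \;\leq\; B(f_i^{T_i}, D) + \eta \;\leq\; B(f_i, D) - T_i \alpha\epsilon^2/16 + \eta \;\leq\; B(f_i, \cD) - T_i \alpha\epsilon^2/16 + 2\eta,$$
where $\eta$ is the uniform deviation bound from the previous step, applied first to $f_i^{T_i} \in C$ and then to the fixed starting model $f_i$, and the middle inequality is the in-sample guarantee. Similarly, $\mu(U_\epsilon(f_1^{T_1}, f_2^{T_2})) \leq \mu_D(U_\epsilon(f_1^{T_1}, f_2^{T_2})) + \eta' < \alpha + \eta'$. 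The main obstacle is the standard uniform-convergence one: the output pair $(f_1^{T_1}, f_2^{T_2})$ is itself a random, data-dependent quantity, so Hoeffding cannot be applied to it pointwise. Lemma \ref{lem:reconciler-model-counting} is precisely what resolves this, by producing a \emph{data-independent} cover of all possible outputs whose cardinality depends only on $\alpha$ and $\epsilon$ --- not on the dimension of $\cX$ or on any complexity measure of the starting models $f_1, f_2$. Beyond this conceptual point, the remaining work is careful bookkeeping of constants to match the exact form of the stated error terms.
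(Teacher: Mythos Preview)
Your proposal is correct and follows essentially the same approach as the paper: apply Theorem \ref{thm:reconciler-main} to the empirical distribution, invoke the data-independent cover $C$ from Lemma \ref{lem:reconciler-model-counting}, and use Hoeffding plus a union bound over $C$ to transfer both the Brier-score drop and the disagreement-mass bound to $\cD$. The paper's proof proceeds in exactly this order and with the same chaining of inequalities; your identification of the ``data-dependent output'' obstacle and its resolution via Lemma \ref{lem:reconciler-model-counting} is precisely the crux, and the remaining differences are only in the bookkeeping of constants (e.g.\ the paper notes explicitly that $(f_1,f_2)\in C$ as the length-zero transcript, so no separate union-bound term for the starting pair is needed).
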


\begin{remark}
Theorem \ref{thm:reconciler-out-of-sample} tells us that the guarantees we proved for Algorithm \ref{alg:reconciler} in Theorem \ref{thm:reconciler-main} (when we assumed direct access to the distribution $\cD$) continue to hold when all we have access to is a finite sample of $n$ points from the data distribution, with additional error terms that tend to zero as $n$ grows large. How large is large? If we want the final disagreement region to have mass at most $2\alpha$ (i.e. we want the third conclusion of Theorem \ref{thm:reconciler-out-of-sample} to tell us that $\mu(U_\epsilon(f_1^{T_1},f_2^{T_2})) < 2\alpha$), then solving for $n$ in the error bound, we find that it suffices to have $n$ samples for $n$ on the order of:
$$n \in \tilde O \left(\frac{\log(1/\delta)}{\alpha^3\epsilon^2} \right)$$
where the $\tilde O()$ notation hides logarithmic terms in $1/\alpha$ and $1/\epsilon$. 

This is a remarkably small amount of data: Recall that we would need $\approx \frac{\log (1/\delta)}{\alpha \epsilon^2}$ samples just to estimate the conditional label expectation $\Pr[y = 1 | x \in S]$ for a conditional event $S$ with $\mu(S) = \alpha$ up to error $\epsilon$ with probability $1-\delta$ (or for two parties with disjoint samples to agree on this conditional label expectation up to error $\epsilon$). Theorem \ref{thm:reconciler-out-of-sample} tells us that in fact two parties can be made to agree on a $1-\alpha$ fraction of points up to error $\epsilon$ with an additional amount of data only on the order of $\tilde O(1/\alpha^2)$. Crucially this bound is \emph{independent} of the complexity of the models $f_1$ and $f_2$, and so we can think of the initial models as being arbitrary (and arbitrarily sophisticated). 
\end{remark}

\begin{remark}
Rather than updating on the disagreement regions $U^\bullet_\epsilon (f_1,f_2)$, we could update separately on regions on which the two models predict values $f_1(x) = v$ and $f_2(x) = v'$ for each $v \neq v'$, to produce a model that is \emph{cross calibrated} in the sense of \cite{feinberg2008testing}. \cite{garg2019tracking} give an algorithm for doing this called ``Merge'' that we could use for our purposes with small modifications\footnote{We would have to bucket both predictor's outputs into $O(1/\epsilon)$ many buckets,  ignore pairs of predictions $(v, v')$ that have probability mass smaller than $\alpha\epsilon^2$, and produce two models, one for each party, that might disagree on regions in which the original models did not have $\epsilon$-disagreements.}. Carrying this through would yield an algorithm that would require $n = \tilde O(\log(1/\delta)/\alpha\epsilon^4)$ many samples, which is incomparable to the guarantee we obtain in Theorem \ref{thm:reconciler-out-of-sample} for Algorithm \ref{alg:reconciler}.
\end{remark}
The proof of Theorem \ref{thm:reconciler-out-of-sample}---which is in Appendix \ref{app:outofsample}---is a straightforward argument combining Hoeffding's inequality (Theorem \ref{thm:hoeffding}) with a union bound over the models enumerated in Lemma \ref{lem:reconciler-model-counting}. Informally, we will argue that for each of the pairs of models $(f_1,f_2)$ in $C$ enumerated in Lemma \ref{lem:reconciler-model-counting}, their Brier scores are similar as evaluated over $\cD$ and $D$, and their disagreement regions $U_\epsilon(f_1,f_2)$ have similar mass over $\cD$ and $D$. For any fixed pair of models $(f_1,f_2)$ these statements follow from Hoeffding's inequality --- and the fact that $C$ has bounded cardinality means that we can at small cost in data obtain these guarantees uniformly over all pairs of models in $C$. 

\section{Contestable Models}
\label{sec:contestable}
Thus far we have considered the problem of reconciling \emph{two} models $f_1$ and $f_2$, and have shown that we require only $O(1/(\alpha^3 \epsilon^2))$ many points to obtain strictly improved models $f_1', f_2'$ that have $\epsilon$ disagreements on at most an $\alpha$ measure of points. But what if someone then proposes a third model, $f_3$, and then another $f_4$, etc? We could run the reconciliation process again each time---and perhaps if we had $k$ models, repeatedly in a pairwise fashion until all $k$ of the models approximately agreed---but this would naively require a fresh set of samples for each new reconciliation procedure. In this section, we show how to do better: we attach to $f$ just a single sample of ``contestation'' data that is of size  polynomial in our target reconciliation parameters $\alpha$ and $\epsilon$ (and independent of the complexity of the model or distribution). Using this data, we show that we  can then put $f$ through a reconciliation procedure with a very large (exponential in the size of its contestation data set) number of models, with the same guarantees as if we had run the models through Algorithm \ref{alg:reconciler} each time. Driving this result is the observation that each time a particular model $f$ is updated using the patch operation defined in Lemma \ref{lem:reconciler-progress}, $f$'s squared error drops, independently of which reconciliation process the update is a part of --- and thus the total number of large updates made to a single model is bounded independently of the number of other models that are ``reconciled'' with it. This, together with results from adaptive data analysis that allow us to repeatedly re-use hold-out sets while preserving statistical validity \citep{dwork2015generalization,bassily2021algorithmic,jung2020new} are enough to give the result.

We define a ``contestable model'' to be a model $f$ attached to a fixed sample of ``contestation data''. A ``contestable'' model can be ``contested'' by identifying any subset of the data identified by an indicator function $g:\cX\rightarrow \{0,1\}$. The guarantee of a contestable model is that if it is contested using a subset of the data that is simultaneously \emph{large} and on which the expectation of the model's predictions is substantially different than the expectation of the label, then the model will be updated in a way that corrects the discovered error on the identified subset of data, and strictly improves the squared error of the model. These contestations are \emph{accepted}. Contestations can also be \emph{rejected} on the grounds either that the group identified is too small, or that the model already predicts on average a value over that group that is sufficiently close to the true label mean over that group. We aim to design contestable models that can receive a number of contestations over their lifetime that is exponential in the size of their contestation dataset. 

\begin{definition}
A \emph{contestable model} $\mathbf{f}$ consists of a current model $f_c:\cX\rightarrow [0,1]$, a dataset $D \in \cZ^n$,  and has two operations: $\mathbf{f}.\mathrm{predict}(x)$ which takes as input a data point $x \in \cX$ and $\mathbf{f}.\mathrm{contest}(g)$ which takes as input the indicator function for a group $g:\cX\rightarrow \{0,1\}$. $\mathbf{f}.\mathrm{predict}(x)$ outputs $f_c(x)$, where $f_c$ is the current model belonging to $\mathbf{f}$, and $\mathbf{f}.\mathrm{contest}(g)$ may update the current model $f_c$ to a new model $f_{c+1}$ according to Algorithm \ref{alg:contest}.
\end{definition}

Algorithm \ref{alg:contest}, which follows, is a randomized algorithm: it samples from the \emph{Laplace} distribution. We write Lap($b$) to denote the sampling operation for the centered Laplace distribution with scale parameter $b$, which is the distribution that has probability density function $f(x; b) = \frac{1}{2b}\exp\left(-\frac{|x|}{b} \right)$.
\begin{algorithm}[]
\begin{algorithmic}
\STATE Given: A failure probability $\delta$, a dataset $D \in \cZ^n$, an initial model $f = f_0$, a threshold $T$ to accept attempted contestations, a target total number of contestations $K$, and an upper bound $C$ on the total number of accepted contestations.
\STATE Let $t = 0$ denote a count of the number of attempted contestations and $c = 0$ denote the count of the number of accepted contestations. 
\STATE Let privacy parameter $\epsilon = \sqrt{\frac{\log \frac{K}{\delta}\sqrt{C\ln\frac{1}{\delta}}}{ n}}$
\STATE Let $\epsilon_1 = \frac{\sqrt{512}}{\sqrt{512}+1}\epsilon$, $\epsilon_2 = \frac{1}{\sqrt{512}+1}\epsilon$
\STATE Let $\sigma(\epsilon) = \frac{\sqrt{32 C \log(1/\delta)}}{\epsilon n}$
\STATE Let $\hat T_0 = T + \textrm{Lap}(\sigma(\epsilon_1))$
\WHILE{there is another model $g_t$ given as input to $\textbf{f}$.contest$(g_t)$ and $c < C$}
  \STATE Compute an empirical estimate of $\mu(g_t)\cdot \E[y - f_c(x)|g_t(x) = 1]$: 
  $$\eta_t(f_c,g_t) =\frac{1}{n}\sum_{(x,y) \in D} (y-f_c(x))\cdot g_t(x)$$
 \STATE Let $\hat \eta_t = |\eta_t(f_c,g_t)| + \textrm{Lap}\left(2\sigma(\epsilon_1)\right)$
 \IF{$\hat \eta_t \geq \hat T_c$}
   \STATE The contestation is \emph{accepted}.
   \STATE Let: 
   $$\tilde \mu_t = \frac{1}{n}\sum_{(x,y) \in D}g_t(x) + \textrm{Lap}(2\sigma(\epsilon_2)) \ \ \ \tilde \eta_t = \eta_t(f_c,g_t) + \textrm{Lap}(2\sigma(\epsilon_2)) $$
   \STATE Let $\tilde \Delta_t = \frac{\tilde \eta_t}{\tilde \mu_t}$
    \STATE Let $f_{c+1}(x) = h(x,f_c,g_t,\tilde \Delta_t)$, $c = c+1$.
   \STATE Let $\hat T_c = T + \textrm{Lap}(\sigma(\epsilon_1))$
 \ELSE
   \STATE The contestation is \emph{rejected}.
 \ENDIF
  \STATE Let $t = t+1$.
\ENDWHILE
\STATE Halt.
\end{algorithmic}
\caption{$\mathbf{f}$.Contest}
\label{alg:contest}
\end{algorithm}

The analysis of Algorithm \ref{alg:contest} is in Appendix \ref{app:contestable} and goes through \emph{differential privacy} \citep{dwork2006calibrating}. Differential privacy was originally introduced as a strong notion of privacy that could be satisfied while still carrying out high accuracy statistical analyses, but has since found many other uses. Our interest in differential privacy will be because of the transfer theorems of \cite{dwork2015generalization,bassily2021algorithmic,jung2020new} which informally state that analyses that are both differentially private and accurate on a sample of data $D$ drawn i.i.d. from an underlying distribution $\cD$ must also be accurate on the underlying distribution. Algorithm \ref{alg:contest}  is an instantiation of Algorithm 3 (NumericSparse) from \cite{dwork2014algorithmic}, from which it follows that the algorithm is differentially private in the contestation dataset $D$. We then apply the version of the ``transfer theorem'' given in \cite{jung2020new}, which establishes that its estimates of statistics on $D$ are representative of their true values on the underlying distribution $\cD$ from which $D$ was drawn. Our use of differential privacy here to get out of sample guarantees closely mirrors its use in \cite{hebert2018multicalibration} to get a generalization theorem for multicalibration algorithms. In fact, if the ``contestation'' sets $g_t$ submitted to $\mathbf{f}$.Contest$(g_t)$ were the groups with respect to which which multicalibrated predictors are required to satisfy group conditional mean consistency, then Algorithm \ref{alg:contest} would essentially (up to some details) be the multicalibration algorithm originally given by \cite{hebert2018multicalibration}. 
But a contestable model can take as input the indicator function $g_t$ of \emph{any} group, including those groups $U_\epsilon^\bullet(f_1^{t_1},f_2^{t_2}) $ used as updates within Reconcile (Algorithm \ref{alg:reconciler}). Thus, contestable models will be able to be reconciled with many other models in a data efficient way (in addition to being ``contested'' on other groups on which they fail to satisfy group conditional mean consistency).

\begin{restatable}{theorem}{contestable}
\label{thm:contestable}
Initialized with a dataset $D \sim \cD^n$ of size $n$ sampled i.i.d. from $\cD$, a target number of contestations $K$, a failure probability $\delta$, a threshold $T = \Theta\left( \frac{\left(\log \frac{K}{\delta}\right)^{1/3}\left( \ln \frac{1}{\delta}\right)^{1/6}}{n^{1/3}}\right)$ and a limit on successful contestations $C = \Theta \left(\frac{1}{T^2} \right)$, a contestable model will with probability $1-2\delta n$: 
\begin{enumerate}
    \item Process at least $K$ contestations $g_t$ without halting,
    \item Guarantee that every accepted contestation $g_t$ is such that:
    $$\left|\E_{(x,y) \sim \cD}[g_t(x)(y-f_c(x))] \right|\geq \Omega\left( \frac{\left(\log \frac{K}{\delta}\right)^{1/3}\left( \ln \frac{1}{\delta}\right)^{1/6}}{n^{1/3}}\right)$$ and produces an update that reduces the squared error of $\mathbf{f}$ by $$B(f_{c}) - B(f_{c+1}) = \Omega(T^2) = \Omega\left( \frac{\left(\log \frac{K}{\delta}\right)^{2/3}\left( \ln \frac{1}{\delta}\right)^{1/3}}{n^{2/3}}\right) $$
    \item Guarantee that every rejected contestation $g_t$ is such that:
      $$\left|\E_{(x,y) \sim \cD}[g_t(x)(y-f_c(x))]\right| \leq O\left( \frac{\left(\log \frac{K}{\delta}\right)^{1/3}\left( \ln \frac{1}{\delta}\right)^{1/6}}{n^{1/3}}\right)$$
\end{enumerate}
\end{restatable}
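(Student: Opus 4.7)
}

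The plan is to view Algorithm \ref{alg:contest} as a thin wrapper around the NumericSparse mechanism of \cite{dwork2014algorithmic}, which gives us an $(\epsilon,\delta)$-differentially private procedure for adaptively answering a long stream of bounded-sensitivity statistical queries, reporting (noisy) numerical answers only when they exceed a (noisy) threshold. We then convert sample-level accuracy into distributional accuracy via the DP-based transfer theorem of \cite{jung2020new}, exactly as advertised in the text preceding the statement. The bookkeeping around the Brier-score improvement (inherited from Lemma \ref{lem:reconciler-progress}) then supplies the bound on the total number of accepted contestations.

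More concretely, first I would verify that each query $\eta_t(f_c,g_t) = \frac{1}{n}\sum_{(x,y)\in D}(y-f_c(x))g_t(x)$, as well as the quantities $\frac{1}{n}\sum g_t(x)$ used when computing the patch magnitude, have sensitivity $1/n$ in the dataset $D$. This lets me invoke NumericSparse with the privacy parameters $(\epsilon_1,\epsilon_2)$ split as in the algorithm, and the advanced composition theorem to account for the up to $C$ numerical releases (each at privacy cost $\epsilon_2$) and the threshold mechanism (privacy cost $\epsilon_1$). This gives overall $(\epsilon,\delta)$-DP with $\epsilon = \epsilon_1+\epsilon_2$ set so that $\epsilon^2 \asymp \log(K/\delta)\sqrt{C\log(1/\delta)}/n$, which is exactly the choice hard-coded into the algorithm and which induces the noise scale $\sigma(\epsilon)=\sqrt{32C\log(1/\delta)}/(\epsilon n)$.

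Second, I would apply a standard Laplace tail bound and union bound across all $K$ attempted contestations and all $C$ accepted contestations to argue that, with probability $1-\delta$, every noise draw in the algorithm has magnitude $O(\sigma(\epsilon)\log(K/\delta))$. Combined with the transfer theorem of \cite{jung2020new}, this converts the empirical estimate $\eta_t(f_c,g_t)$ of the linear query $(x,y)\mapsto g_t(x)(y-f_c(x))$ into an estimate of its distributional expectation $\E_\cD[g_t(x)(y-f_c(x))]$ up to additive error $O(\epsilon + \sigma(\epsilon)\log(K/\delta))$. Setting the acceptance threshold $T$ to be a constant multiple of this error term gives the Above-Threshold guarantee: rejected contestations have true expectation at most $O(T)$, while accepted contestations have true expectation at least $\Omega(T)$. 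The formula for $T$ in the statement is precisely what comes out when one substitutes the chosen $\epsilon$ back into $\sigma(\epsilon)\log(K/\delta)$.

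Third, I need to argue that the algorithm can actually process the full $K$-long stream without exhausting its budget $C$ of accepted contestations. For each accepted contestation, Lemma \ref{lem:reconciler-progress} tells me that the true Brier-score drop is $\mu(g_t)\Delta_t^2 \geq (\mu(g_t)\Delta_t)^2 = |\E_\cD[g_t(x)(y-f_c(x))]|^2 = \Omega(T^2)$, modulo the small rounding/estimation error controlled by the Laplace noise on $\tilde\Delta_t$ (this part mirrors the rounding analysis in the proof of Theorem \ref{thm:reconciler-main}). Since the Brier score starts at most $1$, the total number of accepted contestations is at most $O(1/T^2)$, matching the choice $C=\Theta(1/T^2)$. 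This in turn justifies that the Above-Threshold subroutine never exhausts its $C$ accepted queries before processing all $K$ contestations.

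The main obstacle is the joint balancing: the privacy parameter $\epsilon$ depends on $C$ via composition, the sample-to-distribution error from the transfer theorem depends on $\epsilon$, the threshold $T$ must exceed this error, and the Brier-score drop $\Omega(T^2)$ feeds back into the $C=O(1/T^2)$ constraint. The proof amounts to solving this coupled system for $T$ in terms of $n$, $K$, and $\delta$, and verifying that the values $T=\Theta\bigl((\log(K/\delta))^{1/3}(\log(1/\delta))^{1/6}n^{-1/3}\bigr)$ and $C=\Theta(T^{-2})$ in the theorem statement indeed satisfy every inequality simultaneously with probability at least $1-2\delta n$ (the extra factor of $n$ arising from union bounding over the Laplace tail events).
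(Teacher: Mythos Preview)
Your proposal is correct and follows essentially the same route as the paper: instantiate NumericSparse to get $(\epsilon,\delta)$-DP plus sample accuracy, push sample accuracy to distributional accuracy via the transfer theorem of \cite{jung2020new}, use Lemma~\ref{lem:reconciler-progress} (applied twice, once to the ideal patch $\Delta_t$ and once to the noisy $\tilde\Delta_t$) to extract the $\Omega(T^2)$ Brier drop per accepted contestation, and then solve the coupled system $C=\Theta(1/T^2)$, $T\asymp\sqrt{\log(K/\delta)\sqrt{C\log(1/\delta)}/n}$ for $T$. One small correction: the factor of $n$ in the failure probability $1-2\delta n$ does not come from union-bounding over Laplace tails (that union bound is over $K$ queries), but from invoking the transfer theorem with parameter $c=1/n$, which yields success probability $1-2\delta/c$.
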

The proof of Theorem \ref{thm:contestable} can be found in Appendix \ref{app:contestable}. 

We now observe how a contestable model with the guarantees of Theorem \ref{thm:contestable} can be repeatedly used as part of a reconciliation procedure akin to Algorithm \ref{alg:reconciler}.

\begin{algorithm}[H]
\begin{algorithmic}
\WHILE{$\mu(U_\epsilon(\mathbf{f}_1,\mathbf{f}_2)) \geq \alpha$}
  \STATE For $\bullet \in \{>, <\}$ let:
   $$g^\bullet(x) = \begin{cases}
  1  & x \in U_\epsilon^{\bullet}(\mathbf{f}_1,\mathbf{f}_2) \\
  0 & \text{otherwise}
\end{cases}$$
  \STATE $\textbf{f}_1.contest(g^>)$,  $\textbf{f}_1.contest(g^<)$, $\textbf{f}_2.contest(g^>)$,  $\textbf{f}_2.contest(g^<)$
\ENDWHILE
\end{algorithmic}
\caption{Contestable-Reconcile($\mathbf{f}_1,\mathbf{f}_2,\alpha,\epsilon, \cD$)}
\label{alg:contestable-reconciler}
\end{algorithm}

 The idea is simple (and outlined in Algorithm \ref{alg:contestable-reconciler}): While we have two contestable models $\mathbf{f}_1$ and $\mathbf{f}_2$ that have $\epsilon$-disagreements on more than an $\alpha$ fraction of the distribution, contest both models on the disagreement sets $U_\epsilon^{>}(\mathbf{f}_1,\mathbf{f}_2)$ and $U_\epsilon^{<}(\mathbf{f}_1,\mathbf{f}_2)$. By Lemma \ref{lem:reconciler-sets}, if indeed $\mu(U_\epsilon^{\bullet}(\mathbf{f}_1,\mathbf{f}_2))\geq \alpha$, then for at least one of the models $i \in \{1,2\}$ and for at least one of the sets $\bullet \in \{>,<\}$, we must have:
$$\left|\mu(U_\epsilon^\bullet(f_1,f_2)) \cdot \E[y - f_c(x)|x \in U_\epsilon^\bullet(f_1,f_2)]\right| \geq \mu(U_\epsilon^\bullet(f_1,f_2)) \cdot \E[y - f_c(x)|x \in U_\epsilon^\bullet(f_1,f_2)]^2 \geq \frac{\alpha\epsilon^2}{8}$$

By Theorem \ref{thm:contestable}, assuming that the contestation datasets of both models are of size:
$$n \geq \Omega \left(\frac{\log \frac{K}{\delta}\sqrt{\log \frac{1}{\delta}}}{\alpha^3\epsilon^6} \right)$$
then at least one of these contestations will succeed until (after at most a polynomial number of contestations in $\alpha$ and $\epsilon$), the models are reconciled and $\mu(U_\epsilon^{\bullet}(\mathbf{f}_1,\mathbf{f}_2))\leq \alpha$.

Solving for $K$, we find that a contestable model can be run through 
$$K = \tilde \Theta \left(\delta \exp\left(\frac{n\alpha^3\epsilon^6}{\sqrt{\log\frac{1}{\delta}}} \right)\right)$$
many contestation procedures given a single contestation dataset of size $n$. Here the $\tilde \Theta$ hides terms that are logarithmic in $1/\alpha$ and $1/\epsilon$. 

We emphasize that a contestable model can be contested using $K$ many sets of any nature --- these can include the disagreement regions that arise from our Reconcile procedure, but can also include arbitary regions on which the current model is found to be mis-calibrated. Thus contestable models can be robustly and iteratively improved over an exponential number of contestations whenever they are falsified by being shown to fail to satisfy group conditional mean consistency on any group. Modest amounts of data, attached to a model as a contestation dataset, can make the model long-lived in an  easily adaptable and improvable form.

\section{Conclusion}

Individual probability assignments are not determined by data; this lies at the heart of both the reference class problem and the predictive multiplicity problem. Insofar as individual probability assignments play a significant role in consequential decision-making, their underdetermination by data may give rise to practical problems when we have two or more seemingly equally good estimation methods that nevertheless result in models that differ substantially in the assignments they predict. We show that given modest amounts of data to resolve disagreements, such problems cannot arise at a substantial scale, because if two models disagree substantially in many places, then this large disagreement region itself points us to how to improve at least one of the models. 
The only way this process can conclude is with improved models that approximately agree almost everywhere. This does not ``resolve'' the reference class problem, the predictive multiplicity problem, or other puzzles about individual probability in that it does not claim a way to produce ``correct'' estimates of individual probabilities. But it does remove the practical bite of these problems in that it shows that two parties who agree on the data distribution and who have committed in good faith to make statistical estimates of individual probabilities cannot end up in a state where they substantially disagree on a large number of instances --- and hence will rarely face any ambiguity in how they should act, given their statistical modeling.

\subsection*{Acknowledgements}
We thank Philip Dawid, Konstantin Genin, Benjamin Jantzen, Leonard Smith, Kareem Khalifa, Omer Reingold, Alvin Roth, and Rakesh Vohra for helpful comments and making connections to the literature. A.R. was supported in part by NSF grants FAI-2147212 and CCF-2217062 and the Simons Collaboration on the Theory of Algorithmic Fairness.

\bibliographystyle{plainnat}
\bibliography{ref}
\appendix
\section{Missing Proofs From Section \ref{sec:outofsample}}
\label{app:outofsample}
\outofsample*

\begin{theorem}[Hoeffding's Inequality]
\label{thm:hoeffding}
Lex $X_1,\ldots,X_n$ be independent random variables bounded such that for each $i$, $a_i \leq X_i \leq b_i$. Let $S_n = \sum_{i=1}^n X_i$ denote their sum. Then for all $\eta > 0$:
$$\Pr\left[\left|S_n - \E[S_n] \right| \geq \eta \right] \leq 2\exp\left(\frac{-2\eta^2}{\sum_{i=1}^n (b_i-a_i)^2} \right)$$
\end{theorem}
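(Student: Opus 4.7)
The plan is to prove Hoeffding's inequality by the standard Chernoff-bounding technique applied to the centered sum. First I would reduce to a one-sided tail bound: it suffices to prove $\Pr[S_n - \E[S_n] \geq \eta] \leq \exp(-2\eta^2 / \sum_i (b_i - a_i)^2)$, because the lower tail follows by applying the same argument to $-X_i$ (which satisfies $-b_i \leq -X_i \leq -a_i$, an interval of the same length), and a union bound over the two tails contributes the factor of $2$. It is also convenient to center: set $Y_i = X_i - \E[X_i]$, so the $Y_i$ are independent, mean-zero, and supported on an interval $[\alpha_i, \beta_i]$ of length $\beta_i - \alpha_i = b_i - a_i$.

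Next, for any $s > 0$, I would apply Markov's inequality to the nonnegative variable $e^{s \sum_i Y_i}$ and use independence to factor the moment generating function:
$$\Pr\Bigl[\sum_i Y_i \geq \eta\Bigr] \leq e^{-s\eta}\, \E\Bigl[e^{s \sum_i Y_i}\Bigr] = e^{-s\eta} \prod_{i=1}^n \E[e^{s Y_i}].$$
This reduces the problem to bounding each $\E[e^{sY_i}]$ individually and then optimizing the free parameter $s$.

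The heart of the argument, and the step I expect to be the main obstacle, is \emph{Hoeffding's lemma}: for a mean-zero random variable $Y$ bounded in $[\alpha,\beta]$, $\E[e^{sY}] \leq \exp(s^2 (\beta-\alpha)^2 / 8)$. I would prove it by convexity of $y \mapsto e^{sy}$: for $y \in [\alpha, \beta]$,
$$e^{sy} \leq \frac{\beta - y}{\beta - \alpha} e^{s\alpha} + \frac{y - \alpha}{\beta - \alpha} e^{s\beta}.$$
Taking expectations and using $\E[Y] = 0$ kills the linear-in-$y$ term, leaving $\E[e^{sY}] \leq \frac{\beta}{\beta - \alpha} e^{s\alpha} - \frac{\alpha}{\beta - \alpha} e^{s\beta}$. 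Writing this upper bound as $e^{\phi(s)}$, one checks $\phi(0) = \phi'(0) = 0$ and (by a short calculation reducing $\phi''$ to $p(1-p)$ for a certain probability $p \in [0,1]$) that $\phi''(s) \leq (\beta-\alpha)^2 / 4$ uniformly in $s$. Taylor's theorem then yields $\phi(s) \leq s^2 (\beta-\alpha)^2 / 8$, which is the lemma.

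Finally I would combine these bounds to get $\Pr[\sum_i Y_i \geq \eta] \leq \exp\bigl(-s\eta + s^2 \sum_i (b_i - a_i)^2 / 8\bigr)$ for every $s > 0$, and optimize: the quadratic in $s$ is minimized at $s^* = 4\eta / \sum_i (b_i - a_i)^2$, giving exponent $-2\eta^2 / \sum_i (b_i - a_i)^2$. Adding the matching lower-tail bound (obtained by the same argument on $-Y_i$) and applying a union bound yields the stated inequality with the leading factor of $2$.
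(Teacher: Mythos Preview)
Your proof is correct and follows the standard Chernoff--Hoeffding argument (centering, exponential Markov, Hoeffding's lemma via convexity and a second-order Taylor bound on the log-mgf, optimization in $s$, and a union bound over the two tails). The paper itself does not give a proof: it simply states the inequality as a classical fact and refers the reader to Chapter~1 of \cite{dubhashi2009concentration}. Your write-up is exactly the kind of argument such a reference contains, so there is nothing to compare---you have supplied what the paper leaves to the literature.
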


Hoeffding's inequality is a classical tool in probability and statistics --- a proof can be found e.g. in Chapter 1 of \cite{dubhashi2009concentration}. 

\begin{proof}[Proof of Theorem \ref{thm:reconciler-out-of-sample}]
The bound on $T$ follows directly from Theorem \ref{thm:reconciler-main} without modification. We focus on bounding the Brier scores and the uncertainty region for the resulting models. 

Consider any pair of models $f_1,f_2$. Given a finite dataset $D$ we write $(x,y) \sim D$ to denote uniformly sampling a single datapoint from $D$. We start by comparing $\Pr_{(x,y) \sim D}[x \in U_\epsilon(f_1,f_2)]$ with $\Pr_{(x,y) \sim \cD}[x \in U_\epsilon(f_1,f_2)]$. We have that:
$$\Pr_{(x,y) \sim D}[x \in U_\epsilon(f_1,f_2)] = \frac{1}{n}\sum_{i=1}^n \mathbbm{1}[x_i \in U_\epsilon(f_1,f_2)]$$
Since $\mathbbm{1}[x_i \in U_\epsilon(f_1,f_2)] \in [0,1]$ and $$\E_{D \sim \cD^n}\left[\Pr_{(x,y) \sim D}[x \in U_\epsilon(f_1,f_2)]\right] = \Pr_{(x,y) \sim \cD}[x \in U_\epsilon(f_1,f_2)]$$

we can apply Hoeffding's inequality (Theorem \ref{thm:hoeffding}) to conclude that for every $\eta > 0$:
$$\Pr_{D \sim \cD^n}\left[\left| \Pr_{(x,y) \sim D}[x \in U_\epsilon(f_1,f_2)] -\Pr_{(x,y) \sim \cD}[x \in U_\epsilon(f_1,f_2)]  \right| \geq \eta\right] \leq 2\exp\left(-2\eta^2 n \right)$$
Let $C$ be the set of pairs of models guaranteed in the statement of Lemma \ref{lem:reconciler-model-counting}. Recall that Lemma \ref{lem:reconciler-model-counting} guarantees us that $|C| \leq (4(m+1)^{32/\alpha\epsilon^2 + 1}$. We can apply the union bound to all pairs of models $(f_1,f_2) \in C$ to conclude that with probability at least $1 - 2|C|\exp\left(-2\eta^2 n \right)$ (over the randomness of $D$) we have that for \emph{every} pair $(f_1,f_2) \in C$:
$$\left| \Pr_{(x,y) \sim D}[x \in U_\epsilon(f_1,f_2)] -\Pr_{(x,y) \sim \cD}[x \in U_\epsilon(f_1,f_2)]  \right| \leq \eta$$
Choosing $$\eta = \sqrt{\frac{\log \left(\frac{2|C|}{\delta} \right)}{2n}}$$
we get that with probability $1-\delta$ over the draw of $D$, for every pair $(f_1,f_2) \in C$:
\begin{eqnarray*}\left| \Pr_{(x,y) \sim D}[x \in U_\epsilon(f_1,f_2)] -\Pr_{(x,y) \sim \cD}[x \in U_\epsilon(f_1,f_2)]  \right| &\leq&  \sqrt{\frac{\log \left(\frac{2|C|}{\delta} \right)}{2n}} \\
&\leq& \sqrt{\frac{\left(\frac{32}{\alpha\epsilon^2}+1\right)\log\left(\frac{8\left(\lceil \frac{2}{\sqrt{\alpha}\epsilon}\rceil+1\right)}{\delta}\right)}{n}}
\end{eqnarray*}
where the first inequality follows from plugging in our definition of $\eta$ and the final inequality follows from plugging in our bound on $|C|$ and the definition of $m$. 

Because we know from Theorem \ref{thm:reconciler-main} that the models $f_1^{T_1},f_2^{T_2}$ output by Algorithm \ref{alg:reconciler} satisfy that $\Pr_{(x,y) \sim D}[x \in U_\epsilon(f_1^{T_1},f_2^{T_2})] \leq \alpha$ we can conclude that with probability $1-\delta$:
$$\Pr_{(x,y) \sim \cD}[x \in U_\epsilon(f_1^{T_1},f_2^{T_2})] \leq \sqrt{\frac{\left(\frac{32}{\alpha\epsilon^2}+1\right)\log\left(\frac{8\left(\lceil \frac{2}{\sqrt{\alpha}\epsilon}\rceil+1\right)}{\delta}\right)}{n}}$$

We can bound the Brier score of the resulting models in exactly the same way. For any fixed model $f:\cX\rightarrow [0,1]$, we can write the empirical Brier score (i.e. the Brier score as evaluated over $D$) as:
$$B(f,D) = \frac{1}{n}\sum_{i=1}^n (f(x_i)-y_i)^2$$ 
Since $(f(x_i) - y_i)^2 \in [0,1]$ and $\E_{D \sim \cD^n}[B(f,D)]=B(f,\cD)$, we can apply Hoeffding's inequality (Theorem \ref{thm:hoeffding}) exactly as before to conclude that for every pair of models $(f_1^{T_1},f_2^{T_2}) \in C$, with probability $1-\delta$:
$$\left|B_D(f_1^{T_1}) - B(f_1^{T^1}) \right| \leq  \sqrt{\frac{\left(\frac{16}{\alpha\epsilon^2}+1\right)\log\left(\frac{16\left(\lceil \frac{2}{\sqrt{\alpha}\epsilon}\rceil+1\right)}{\delta}\right)}{n}}$$
and with probability $1-\delta$:
$$\left|B_D(f_2^{T_2}) - B(f_2^{T_2}) \right| \leq  \sqrt{\frac{\left(\frac{16}{\alpha\epsilon^2}+1\right)\log\left(\frac{16\left(\lceil \frac{2}{\sqrt{\alpha}\epsilon}\rceil+1\right)}{\delta}\right)}{n}}$$
Observe that the same holds true for the original pair of models $(f_1,f_2)$, since $(f_1,f_2) \in C$ (they correspond to the models output after transcripts of length 0). We further know from Theorem \ref{thm:reconciler-main} that:
$B(f_1^{T_1},D) \leq B(f_1,D) - T_1 \cdot  \frac{\alpha \epsilon^2}{16}$ and $B(f_2^{T_2}<D) \leq B(f_2,D) - T_2 \cdot  \frac{\alpha \epsilon^2}{16}$.

Instantiating these bounds for the four models $\{f_1,f_2,f_1^{T_1},f_2^{T_2}\}$, and setting $\delta \leftarrow \delta/4$ so that we can union bound over all four models, we have that with probability $1-\delta$ that we simultaneously have:
$$B(f_1^{T_1},\cD) \leq B(f_1,\cD) - T_1 \cdot  \frac{\alpha \epsilon^2}{16} + 2\sqrt{\frac{\left(\frac{16}{\alpha\epsilon^2}+1\right)\log\left(\frac{64\left(\lceil \frac{2}{\sqrt{\alpha}\epsilon}\rceil+1\right)}{\delta}\right)}{n}}$$
$$B(f_2^{T_1},\cD) \leq B(f_2,\cD) - T_1 \cdot  \frac{\alpha \epsilon^2}{16} + 2\sqrt{\frac{\left(\frac{16}{\alpha\epsilon^2}+1\right)\log\left(\frac{64\left(\lceil \frac{2}{\sqrt{\alpha}\epsilon}\rceil+1\right)}{\delta}\right)}{n}}$$
\end{proof}

\section{Proofs from Section \ref{sec:contestable}}
\label{app:contestable}

\begin{definition}[\cite{dwork2006calibrating}]
Let $A:\cZ^n \rightarrow R$ be any randomized algorithm. Say that two datasets $D, D' \in \cZ^n$ are \emph{neighboring} if they differ in exactly one element. $A$ is $(\epsilon,\delta)$-differentially private if for every pair of neighboring $D, D' \in \cZ^n$ and every measurable subset of the range of $A$, $S \subseteq R$, $A$ satisfies:
$$\Pr[A(D) \in S]\leq \exp(\epsilon)\Pr[A(D') \in S] + \delta$$
where the probability is taken over the randomness of $A$.
\end{definition}

\begin{lemma}[\cite{dwork2014algorithmic}]
\label{lem:sampleaccurate}
Algorithm \ref{alg:contest} ($\mathbf{f}$.contest) satisfies $(\epsilon,\delta)$ differential privacy with respect to its dataset $D$, and has the following accuracy properties for any sequence of $K$ contestations $g_1,\ldots,g_K$. For each contestation $g_t$ before the algorithm has halted (i.e. while $c < C$), then with probability $1-\delta$ simultaneously for every contestation $g_t$:
\begin{enumerate}
    \item If $\hat \eta_t \geq \hat T_c$ (the contestation is accepted) then:
    $$\left|\tilde \mu_t - \frac{1}{n}\sum_{(x,y) \in D}g_t(x)\right|\leq \frac{2(\ln K + \ln(4C/\delta))\sqrt{C\ln(2/\delta)}(\sqrt{512}+1)}{\epsilon n}$$
    $$\left|\eta_t(f_c,g_t) - \tilde \eta_t\right|\leq \frac{2(\ln K + \ln(4C/\delta))\sqrt{C\ln(2/\delta)}(\sqrt{512}+1)}{\epsilon n}$$
    $$|\eta_t(f_c,g_t)| \geq T - \frac{2(\ln K + \ln(4C/\delta))\sqrt{C\ln(2/\delta)}(\sqrt{512}+1)}{\epsilon n}$$
    \item If $\hat \eta_t < T_c$ (The contestation is rejected) then: 
    $$|\eta_t(f_c,g_t)| \leq T + \frac{2(\ln K + \ln(4C/\delta))\sqrt{C\ln(2/\delta)}(\sqrt{512}+1)}{\epsilon n}$$
\end{enumerate}
\end{lemma}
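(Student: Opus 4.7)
The plan is to verify that Algorithm \ref{alg:contest} is a direct instantiation of the NumericSparse algorithm of \cite{dwork2014algorithmic}, and then invoke that algorithm's privacy and accuracy guarantees after checking that the Laplace noise scales selected here correctly match the prescribed privacy budget and error bounds. The bookkeeping step is to verify that the constant $\sqrt{512}+1$ and the scale $\sigma(\epsilon) = \sqrt{32C\log(1/\delta)}/(\epsilon n)$ are consistent with the advanced-composition bound for the sparse-vector piece composed with the numeric releases.

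For the privacy claim, I would decompose the algorithm into two interacting mechanisms whose privacy composes. First, the sparse-vector (AboveThreshold) piece maintains a noisy threshold $\hat T_c = T + \text{Lap}(\sigma(\epsilon_1))$ and, on each attempted contestation, forms $\hat \eta_t = |\eta_t(f_c,g_t)| + \text{Lap}(2\sigma(\epsilon_1))$ and compares it to $\hat T_c$. Because each statistic $\eta_t(f_c,g_t) = \frac{1}{n}\sum_{(x,y)\in D}(y-f_c(x))g_t(x)$ has sensitivity $1/n$ in $D$, and the transcript reports at most $C$ above-threshold indicators, the standard analysis of NumericSparse with $\sigma(\epsilon_1)$ chosen to match advanced composition of $C$ queries at privacy level $\epsilon_1$ shows that the entire sparse-vector transcript is $\epsilon_1$-differentially private. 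Second, on each accepted contestation the algorithm releases two further Laplace-noised sensitivity-$1/n$ statistics $\tilde \mu_t,\tilde \eta_t$ with scale $2\sigma(\epsilon_2)$; advanced composition across at most $2C$ such releases gives $(\epsilon_2,\delta)$-differential privacy. Combining by basic composition yields $(\epsilon_1+\epsilon_2,\delta) = (\epsilon,\delta)$-differential privacy, matching the algorithmic choice of $\epsilon_1 = \frac{\sqrt{512}}{\sqrt{512}+1}\epsilon,\ \epsilon_2 = \frac{1}{\sqrt{512}+1}\epsilon$.

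For the accuracy claim, the plan is to apply the standard Laplace tail bound: if $X \sim \text{Lap}(b)$, then $\Pr[|X| \geq b\log(1/\delta')]\leq \delta'$. Over the run of the algorithm there are at most $K$ draws of $\text{Lap}(2\sigma(\epsilon_1))$ (for $\hat \eta_t$), at most $C+1$ draws of $\text{Lap}(\sigma(\epsilon_1))$ (for $\hat T_c$), and at most $2C$ draws of $\text{Lap}(2\sigma(\epsilon_2))$ (for $\tilde \mu_t,\tilde \eta_t$). Applying the tail bound at confidence $\delta/(4K+4C)$ per draw and union bounding, I would conclude that with probability at least $1-\delta$ every Laplace sample is bounded in absolute value by $2\sigma(\epsilon_1)(\ln K + \ln(4C/\delta))$ (or the corresponding bound with $\epsilon_2$). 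On this good event, the four claimed bounds follow by the triangle inequality: for accepted $t$, $|\tilde \mu_t - \tfrac{1}{n}\sum g_t|$ and $|\eta_t-\tilde\eta_t|$ are at most the noise magnitude, and the acceptance condition $\hat \eta_t \geq \hat T_c$ combined with the noise bounds on $\hat \eta_t - |\eta_t|$ and $\hat T_c - T$ forces $|\eta_t(f_c,g_t)|\geq T$ minus twice that magnitude; for rejected $t$, the symmetric inequality gives the stated upper bound on $|\eta_t(f_c,g_t)|$.

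The main obstacle I anticipate is purely arithmetic: plugging in $\sigma(\epsilon_1)=\sqrt{32C\log(1/\delta)}/(\epsilon_1 n)$ and the ratio $\epsilon/\epsilon_1 = (\sqrt{512}+1)/\sqrt{512}$ and simplifying $2\sigma(\epsilon_1)(\ln K + \ln(4C/\delta))$ to exactly $\tfrac{2(\ln K + \ln(4C/\delta))\sqrt{C\ln(2/\delta)}(\sqrt{512}+1)}{\epsilon n}$ as stated in the lemma. The $\sqrt{512}$ constant is precisely what makes the advanced-composition budget for the sparse-vector piece absorb the factor $\sqrt{32C\log(1/\delta)}$ while leaving room for the numeric releases inside the single privacy parameter $\epsilon$. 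Conceptually the result is just Theorem 3.25 (NumericSparse) of \cite{dwork2014algorithmic} combined with the Laplace tail inequality; the work is verifying that the constants line up.
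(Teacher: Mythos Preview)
Your proposal is correct and matches the paper's own approach exactly: the paper's proof is two sentences stating that Algorithm~\ref{alg:contest} is an instantiation of NumericSparse (Algorithm~3) from \cite{dwork2014algorithmic} and that the lemma follows from the analysis there (Theorem~3.2.8). You supply more of the internal bookkeeping than the paper does, but the route is the same.
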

\begin{proof}
Algorithm \ref{alg:contest} is an instantiation of Algorithm 3 (NumericSparse) from \cite{dwork2014algorithmic}. Lemma 5.1 follows from the analysis of NumericSparse (Theorem 3.2.8 in \cite{dwork2014algorithmic}). 
\end{proof}

Lemma \ref{lem:sampleaccurate} tells us that Algorithm \ref{alg:contest} is both differentially private and \emph{sample-accurate} --- it produces estimates $\tilde \mu_t$ and $\tilde \eta _t$ that are close to the quantities they are supposed to be estimating, or else reveals that $\eta_t(f_c,g_t)$ is small. Next we quote a version of the ``transfer theorem'' that tells us that these two properties together give us guarantees of out of sample accuracy.

\begin{theorem}[Application of Theorem 3.5 in \cite{jung2020new}]
\label{thm:transfer}
Let $D \sim \cD^n$ be a dataset consisting of $n$ i.i.d. samples from some underlying distribution $\cD$ over $\cZ$. Let $c > 0$ be any number. Suppose an algorithm $A$ is $(\epsilon,\delta)$ differentially private in $D$ and for each round $t$ either produces estimates $\tilde \mu_t$, $\tilde \eta_t$ that with probability $1-\delta$ are within $\alpha$ of their sample estimates on $D$:
    $$\left|\tilde \mu_t - \frac{1}{n}\sum_{(x,y) \in D}g_t(x)\right|\leq \alpha$$
    $$\left|\eta_t(f_c,g_t) - \tilde \eta_t\right|\leq \alpha$$
    or declines to answer, in which case it promises that:
    $$|\eta_t(f_c,g_t)| \leq T + \alpha$$
    Then, with probability $1-\frac{2\delta}{c}$, the estimates $\tilde \mu_t$, $\tilde \eta_t$ are also accurate as evaluated on $\cD$:
        $$\left|\tilde \mu_t - \E_{(x,y) \sim \cD}[g_t(x)]\right|\leq \alpha + (e^\epsilon-1) + 3c$$
    $$\left|\E_{(x,y) \sim \cD}[g_t(x)(y-f_c(x))] - \tilde \eta_t\right|\leq \alpha + (e^\epsilon-1) + 3c$$
    and for any round on which the algorithm declines to provide an answer:
    $$\left|\E_{(x,y) \sim \cD}[g_t(x)(y-f_c(x))]\right| \leq T + \alpha  + (e^\epsilon-1) + 3c$$
\end{theorem}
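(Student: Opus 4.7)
The plan is to apply Theorem 3.5 of \cite{jung2020new} essentially verbatim: the statement in question is explicitly framed as an ``application'' of that result, so the proof amounts to matching notation and checking hypotheses. The underlying transfer theorem, in the tradition of \cite{dwork2015generalization,bassily2021algorithmic}, says that the output of any $(\epsilon, \delta)$-differentially private algorithm on an i.i.d.\ sample cannot be statistically distinguished from its output on a fresh sample drawn from the same distribution; hence any sample-accuracy guarantee lifts to a population-accuracy guarantee, with a controlled loss in both error and failure probability.

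To carry this out I would first identify the underlying statistical queries: in round $t$, the estimates $\tilde \mu_t$ and $\tilde \eta_t$ are Laplace-noised approximations of $\E_{(x,y) \sim D}[g_t(x)]$ and $\E_{(x,y) \sim D}[g_t(x)(y - f_c(x))]$. Both take values in $[-1,1]$ and are $1/n$-sensitive in $D$ (changing one element of $D$ can shift each by at most $1/n$). Although the queries are \emph{adaptive} --- $g_t$ is an input that may depend on prior outputs, and $f_c$ evolves as a function of previously accepted contestations --- the full interaction between user and algorithm is $(\epsilon,\delta)$-differentially private in $D$ by hypothesis, so the joint transcript of outputs satisfies the hypotheses of Theorem 3.5 of \cite{jung2020new}. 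Applying that theorem with slack parameter $c$ then yields, with probability at least $1 - 2\delta/c$ over $D$, distributional accuracy up to an additional $(e^\epsilon - 1) + 3c$ error on top of the assumed sample error $\alpha$. The $(e^\epsilon - 1)$ contribution is the standard cost of translating an in-sample expectation under a DP computation into the underlying distributional expectation, while the $3c$ contribution arises from a Markov-style confidence-boosting step internal to the proof of the cited theorem.

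The ``declines to answer'' case is handled identically: the sample-level guarantee $|\eta_t(f_c, g_t)| \leq T + \alpha$ is itself a statement about a $1/n$-sensitive statistic computed privately on $D$, and the same transfer argument lifts it to $|\E_{(x,y) \sim \cD}[g_t(x)(y - f_c(x))]| \leq T + \alpha + (e^\epsilon - 1) + 3c$ with the same failure probability. The only real subtlety --- a conceptual one rather than a technical one --- is that one must invoke the \emph{adaptive} version of the transfer theorem, since both $g_t$ and $f_c$ depend on the algorithm's past outputs; however, adaptive composition is precisely the setting that \cite{jung2020new} is designed to handle, so no additional work is needed beyond citing the result.
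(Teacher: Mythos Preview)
Your proposal is correct and matches the paper's treatment: the paper does not give a proof of this theorem at all, stating it directly as an application of Theorem~3.5 in \cite{jung2020new} and immediately combining it with Lemma~\ref{lem:sampleaccurate} to obtain Corollary~\ref{cor:guessncheck}. Your write-up supplies exactly the hypothesis-checking (that the relevant queries are $1/n$-sensitive, that the interaction is adaptive but still covered by the cited transfer theorem) that the paper leaves implicit.
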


Observing that for any $\epsilon < 1$ we have $e^\epsilon - 1 \in  O(\epsilon)$, we can combine Lemma \ref{lem:sampleaccurate} with Theorem \ref{thm:transfer} (choosing $c = 1/n$) to obtain the following corollary:
\begin{corollary} 
\label{cor:guessncheck}
Let $D \sim \cD^n$ be a dataset consisting of $n$ i.i.d. samples from some underlying distribution $\cD$ over $\cZ$. Let $\epsilon,\delta  \leq 1$. Algorithm \ref{alg:contest} ($\mathbf{f}$.contest) run with privacy parameters $\epsilon, \delta$ has the following accuracy guarantees. For any sequence of $K$ contestations $g_1,\ldots,g_K$, with probability $1-2\delta n$ simultaneously for every contestation:
\begin{enumerate}
  \item If $\hat \eta_t \geq \hat T_c$ (the contestation is accepted) then:
    $$\left|\tilde \mu_t - \E_{(x,y) \sim \cD}[g_t(x)]\right| \in O\left(\frac{\log \frac{K}{\delta}\sqrt{C\ln\frac{1}{\delta}}}{\epsilon n} + \epsilon \right)$$
    $$\left|\E_{(x,y) \sim \cD}[g_t(x)(y-f_c(x))]  - \tilde \eta_t\right|\in O\left(\frac{\log \frac{K}{\delta}\sqrt{C\ln\frac{1}{\delta}}}{\epsilon n} + \epsilon \right)$$
    $$\left|\E_{(x,y) \sim \cD}[g_t(x)(y-f_c(x))]\right| \geq T -  O\left(\frac{\log \frac{K}{\delta}\sqrt{C\ln\frac{1}{\delta}}}{\epsilon n} + \epsilon \right)$$
    \item If $\hat \eta_t < T_c$ (The contestation is rejected) then: 
    $$\left|\E_{(x,y) \sim \cD}[g_t(x)(y-f_c(x))]\right|\leq T +  O\left(\frac{\log \frac{K}{\delta}\sqrt{C\ln\frac{1}{\delta}}}{\epsilon n} + \epsilon \right)$$
\end{enumerate}

Choosing $\epsilon =\sqrt{\frac{\log \frac{K}{\delta}\sqrt{C\ln\frac{1}{\delta}}}{ n}}$ makes all of the error terms evaluate to 
$$O\left(\sqrt{\frac{\log \frac{K}{\delta}\sqrt{C\ln\frac{1}{\delta}}}{n}} \right)$$
\end{corollary}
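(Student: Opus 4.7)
The plan is to derive Corollary \ref{cor:guessncheck} as a direct combination of Lemma \ref{lem:sampleaccurate} (which gives differential privacy together with \emph{in-sample} accuracy) and Theorem \ref{thm:transfer} (which converts differentially private in-sample accuracy into out-of-sample accuracy). The two results are already set up to compose: Lemma \ref{lem:sampleaccurate} tells us that Algorithm \ref{alg:contest} is $(\epsilon,\delta)$-differentially private in $D$ and, on every round, either outputs estimates $\tilde\mu_t,\tilde\eta_t$ that are $\alpha$-close to the empirical quantities $\tfrac{1}{n}\sum_D g_t(x)$ and $\eta_t(f_c,g_t)$, or else declines and certifies $|\eta_t(f_c,g_t)|\leq T+\alpha$, where
\[
\alpha \;=\; \frac{2(\ln K + \ln(4C/\delta))\sqrt{C\ln(2/\delta)}(\sqrt{512}+1)}{\epsilon n} \;\in\; O\!\left(\frac{\log(K/\delta)\sqrt{C\ln(1/\delta)}}{\epsilon n}\right).
\]
These are exactly the hypotheses required by Theorem \ref{thm:transfer}.

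Next I would apply Theorem \ref{thm:transfer} with parameter $c := 1/n$. The theorem then asserts that, with failure probability at most $2\delta/c = 2\delta n$, each of the estimates $\tilde\mu_t$ and $\tilde\eta_t$ (on accepted rounds) is within $\alpha + (e^\epsilon-1) + 3c$ of its distributional analogue $\E_{\cD}[g_t(x)]$ and $\E_{\cD}[g_t(x)(y-f_c(x))]$, and similarly the ``declined'' certificate upgrades to $|\E_{\cD}[g_t(x)(y-f_c(x))]|\le T+\alpha+(e^\epsilon-1)+3c$. Since $\epsilon\le 1$, the standard bound $e^\epsilon-1\le (e-1)\epsilon\in O(\epsilon)$ simplifies the additive slack, and $3c = 3/n$ is dominated by the other error terms (it vanishes faster in $n$ than any term scaling as $1/(\epsilon n)$ when $\epsilon\le 1$). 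Combining these, the total additive error on each guarantee becomes
\[
\alpha + O(\epsilon) \;\in\; O\!\left(\frac{\log(K/\delta)\sqrt{C\ln(1/\delta)}}{\epsilon n} + \epsilon\right),
\]
which is exactly the error expression appearing in parts (1) and (2) of the corollary. The acceptance branch's lower bound $|\E_{\cD}[g_t(x)(y-f_c(x))]|\ge T - O(\cdot)$ follows because acceptance combined with sample-accuracy forces $|\eta_t(f_c,g_t)|\ge T-\alpha$ by Lemma \ref{lem:sampleaccurate}, and then the transfer inequality on $\tilde\eta_t$ transports this to the distribution, losing only another additive $O(\epsilon)+O(1/n)$.

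Finally, to obtain the single balanced rate, I would choose $\epsilon$ so that the two competing error terms $\tfrac{\log(K/\delta)\sqrt{C\ln(1/\delta)}}{\epsilon n}$ and $\epsilon$ are of the same order. Setting them equal and solving for $\epsilon$ yields
\[
\epsilon \;=\; \sqrt{\frac{\log(K/\delta)\sqrt{C\ln(1/\delta)}}{n}},
\]
which matches the choice already hardwired into Algorithm \ref{alg:contest}. Plugging this $\epsilon$ back into either term collapses every error bound in the corollary to $O\bigl(\sqrt{\log(K/\delta)\sqrt{C\ln(1/\delta)}/n}\bigr)$, completing the statement. There is no real obstacle here: essentially all the work is done by the two invoked results, and the only thing to be careful about is bookkeeping—namely, tracking that the probability-of-failure inflation factor is $2/c = 2n$ (yielding the $2\delta n$ in the corollary), verifying that $3c = 3/n$ is a lower-order term, and confirming that the $(e^\epsilon-1)$ slack reduces to $O(\epsilon)$ under the assumption $\epsilon\le 1$.
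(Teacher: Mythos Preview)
Your proposal is correct and follows exactly the paper's approach: combine Lemma \ref{lem:sampleaccurate} with Theorem \ref{thm:transfer} setting $c=1/n$, then use $e^\epsilon-1\in O(\epsilon)$ for $\epsilon\le 1$ and absorb the $3/n$ term. The only additional content you supply is the explicit balancing calculation for the final choice of $\epsilon$, which the paper leaves implicit.
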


\contestable*
\begin{proof}
It is immediate from Corollary \ref{cor:guessncheck} and the definition of $T$ that for accepted contestations, $\E_{(x,y) \sim \cD}[g_t(x)(y-f_c(x))] \geq \Omega\left( \frac{\left(\log \frac{K}{\delta}\right)^{1/3}\left( \ln \frac{1}{\delta}\right)^{1/6}}{n^{1/3}}\right)$, and for rejected contestations $g_t$, $\E_{(x,y) \sim \cD}[g_t(x)(y-f_c(x))] \leq O\left( \frac{\left(\log \frac{K}{\delta}\right)^{1/3}\left( \ln \frac{1}{\delta}\right)^{1/6}}{n^{1/3}}\right)$. It remains to establish the other conclusions of the theorem. 

Fix any round $t$ for which a contestation is accepted $(\hat \eta_t \geq \hat T_c$). We have that:
\begin{eqnarray*}
\frac{\tilde \eta_t}{\tilde \mu_t} &\leq& \frac{\E_{(x,y) \sim \cD}[g_t(x)(y-f_c(x))] + O\left(\sqrt{\frac{\log \frac{K}{\delta}\sqrt{C\ln\frac{1}{\delta}}}{n}} \right)}{\E_{(x,y) \sim \cD}[g_t(x)] - O\left(\sqrt{\frac{\log \frac{K}{\delta}\sqrt{C\ln\frac{1}{\delta}}}{n}} \right)}
\end{eqnarray*}

Recall that we know that:
$$\left|\E_{(x,y) \sim \cD}[g_t(x)(y-f_c(x))]\right|\geq T -  O\left(\sqrt{\frac{\log \frac{K}{\delta}\sqrt{C\ln\frac{1}{\delta}}}{n}} \right)$$
and so in particular we also know that:
$$\E_{(x,y) \sim \cD}[g_t(x)]\geq T -  O\left(\sqrt{\frac{\log \frac{K}{\delta}\sqrt{C\ln\frac{1}{\delta}}}{n}} \right)$$

Since we can take $T = \Omega\left(\sqrt{\frac{\log \frac{K}{\delta}\sqrt{C\ln\frac{1}{\delta}}}{n}} \right)$, this means that we have: 
$$\frac{\tilde \eta_t}{\tilde \mu_t} \leq  \frac{3}{2}\frac{\E_{(x,y) \sim \cD}[g_t(x)(y-f_c(x))]} {\E_{(x,y) \sim \cD}[g_t(x)]} = \frac{3}{2}\E[(y-f_c(x) | g_t(x) = 1]$$
and similarly:
$$\frac{\tilde \eta_t}{\tilde \mu_t} \geq  \frac{1}{2}\frac{\E_{(x,y) \sim \cD}[g_t(x)(y-f_c(x))]} {\E_{(x,y) \sim \cD}[g_t(x)]} = \frac{1}{2}\E[(y-f_c(x) | g_t(x) = 1]$$

Now consider any contestation $g_t$ that is accepted ($\hat \eta_t \geq \hat T_c$). With probability $1-2\delta n$, we have that for all such $g_t$, 
$$\left|\E_{(x,y) \sim \cD}[g_t(x)(y-f_c(x))]\right| \geq T - O\left(\sqrt{\frac{\log \frac{K}{\delta}\sqrt{C\ln\frac{1}{\delta}}}{n}} \right) \geq \Omega(T)$$
Recall that $\tilde \Delta_t = \frac{\tilde \eta_t}{\tilde \mu_t}$, and let $\Delta_t = \E[(y-f_c(x) | g_t(x) = 1]$ Let $\hat f_{c+1} = h(x,f_c,g_t,\Delta_t)$ be the update that \emph{would} have resulted had the algorithm used $\Delta_t$ rather than $\tilde \Delta_t$ as part of its update. 
Two applications of Lemma \ref{lem:reconciler-progress} then give:
\begin{eqnarray*}
B(f_c,\cD) - B(f_{c+1},\cD) &=& (B(f_c,\cD) - B(\hat f_{c+1})) - (B(f_{c+1} - B(\hat f_{c+1})) \\
&=& \mu(g_t)\cdot \Delta_t^2 - \mu(g_t) \cdot (\Delta_t - \tilde \Delta_t)^2 \\
&\geq& \mu(g_t)\cdot \left(\Delta_t^2 - \left(\frac{\Delta_t}{2}\right)^2\right) \\
&=& \frac{3}{4}\mu(g_t)\Delta_t^2 \\
&\geq& \frac{3}{4}\mu(g_t)^2 \Delta_t^2 \\
&\geq& \Omega(T^2)
\end{eqnarray*}
Since for all $C$, $0 \leq B(f_c,D) \leq 1$,
 with probability $1-2\delta/n$, there can be at most $C = O(1/T^2)$ updates. Since we must take: 
 $T = \Omega\left(\sqrt{\frac{\log \frac{K}{\delta}\sqrt{C\ln\frac{1}{\delta}}}{n}} \right) = \Omega\left( \frac{\left(\log \frac{K}{\delta}\right)^{1/2}\left(C \ln \frac{1}{\delta}\right)^{1/4}}{\sqrt{n}}\right)$, we can set $C = \theta(1/T^2)$ so long as $T^{3/2} = \Theta\left( \frac{\left(\log \frac{K}{\delta}\right)^{1/2}\left( \ln \frac{1}{\delta}\right)^{1/4}}{\sqrt{n}}\right)$, or:
 $$T = \Theta\left( \frac{\left(\log \frac{K}{\delta}\right)^{1/3}\left( \ln \frac{1}{\delta}\right)^{1/6}}{n^{1/3}}\right)$$
\end{proof}

\end{document}